\newtheorem{lemma}{Lemma}
\newtheorem{proposition}{Proposition}
\newtheorem{corollary}{Corollary}
\newtheorem{definition}{Definition}
\newtheorem{example}{Example}
\newif\ifappendix
\begin{document}
%


\title{From Knowledge Graph Embedding to Ontology Embedding?\\ An Analysis of the Compatibility between Vector Space Representations and Rules}


\author{V\'ictor Guti\'errez-Basulto \and Steven Schockaert\\
School of Computer Science and Informatics\\
Cardiff University, Cardiff, UK\\
\{gutierrezbasultov, 
schockaerts1\}@cardiff.ac.uk
}
\maketitle
\begin{abstract}

 Recent years have witnessed the 
 successful application of low-dimensional vector space representations of knowledge graphs to predict missing facts or find erroneous ones.  However, it is not yet well-understood to what extent ontological knowledge, e.g.\ given as a set of (existential) rules, can be embedded in a principled way. 
 To address this shortcoming, in this paper we introduce a general framework based on a view of relations as regions, which allows us to study 
 the compatibility between ontological knowledge and different types of vector space embeddings.
 Our technical contribution is two-fold. First, we show that some of the most popular existing embedding methods are not capable of modelling even very simple types of rules, which in particular also means that they are not able to learn the type of dependencies captured by such rules. Second, we study a model in which relations are modelled as \emph{convex} regions. We show particular that ontologies which are expressed using so-called quasi-chained existential rules can be exactly represented using convex regions, such that any set of facts which is induced using that vector space embedding is logically consistent and deductively closed with respect to the input ontology.
\end{abstract}

\section{Introduction}\label{sec:intro}

Knowledge graphs (KGs), i.e.\ sets of (\textit{subject,\allowbreak predicate,\allowbreak object}) triples, play an increasingly central role in fields such as information retrieval and natural language processing \cite{dong2014knowledge,DBLP:journals/ai/Camacho-Collados16}. A wide variety of KGs are currently available, including carefully curated resources such as WordNet \cite{miller1995wordnet}, crowdsourced resources such as Freebase \cite{bollacker2008freebase}, ConceptNet \cite{speer2017conceptnet} and WikiData \cite{vrandevcic2014wikidata}, and resources that have been extracted from natural language such as NELL \cite{carlson2010toward}. However, despite the large scale of some of these resources, they are, perhaps inevitably, far from complete. This has sparked a large amount of a research on the topic of automated knowledge base completion, e.g.\ random-walk based machine learning models \cite{GardnerM15} and factorization and embedding approaches~\cite{WangMWG17}.
The main premise underlying these approaches is that many plausible triples can be found by exploiting the regularities that exist in a typical knowledge graph. For example, if we know (\textit{Peter Jackson}, \textit{Directed}, \textit{The fellowship of the ring}) and (\textit{The fellowship of the ring}, \textit{Has-sequel}, \textit{The two towers}), we may expect the triple (\textit{Peter Jackson}, \textit{Directed}, \textit{The two towers}) to be somewhat plausible, if we can observe from the rest of the knowledge graph that sequels are often directed by the same person. 

Due to their conceptual simplicity and high scalability, \textit{knowledge graph embeddings} have become one of the most popular strategies for discovering and exploiting such regularities. These embeddings  are $n$-dimensional vector space representations, in which each entity $e$ (i.e.\ each node from the KG) is associated with a vector $\mathbf{e}\in \mathbb{R}^n$ and each relation name $R$ is associated with a scoring function $s_R : \mathbb{R}^n \times \mathbb{R}^n \rightarrow \mathbb{R}$ that encodes information about the likelihood of triples. For the ease of presentation, we will formulate KG embedding models such that $s_{R_1}(\mathbf{e_1},\mathbf{f_1}) < s_{R_2}(\mathbf{e_2},\mathbf{f_2})$ iff the triple $(e_1,R_1,f_1)$ is considered \emph{more} likely than the triple $(e_2,R_2,f_2)$. Both the entity vectors $\mathbf{e}$ and the scoring functions $s_R$ are learned from the information in the given KG. The main assumption is that the resulting vector space representation of the KG is such that it captures the important regularities from the considered domain. In particular, there will be triples $(e,R,f)$ which are not in the original KG, but for which $s_{R}(\mathbf{e},\mathbf{f})$ is nonetheless low. They thus correspond to facts which are plausible, given the regularities that are observed in the KG as a whole, but which are not contained in the original KG. The number of dimensions $n$ of the embedding essentially controls the cautiousness of the knowledge graph completion process: the fewer dimensions, the more regularities can be discovered by the model, but the higher the risk of unwarranted inferences. On the other hand, if the number of dimensions is too high, the embedding may simply capture the given KG, without suggesting any additional plausible triples.


For example, in the seminal TransE model \cite{NIPS20135071}, relations are modelled as vector translations. In particular, the TransE scoring function is given by $s_R(\mathbf{e},\mathbf{f}) = d(\mathbf{e} + \mathbf{r}, \mathbf{f})$, where $d$ is the Euclidean distance and $\mathbf{r} \in \mathbb{R}^n$ is a vector encoding of  the relation name $R$. Another popular model is DistMult \cite{yang2014embedding}, which corresponds to the choice $s_R(\mathbf{e},\mathbf{f}) = -\sum_{i=1}^n e_i r_i f_i$, where we write $e_i$ for the $i^{\textit{th}}$ coordinate of $\mathbf{e}$, and similar for $\mathbf{f}$ and $\mathbf{r}$. 

To date, surprisingly little is understood about the types of regularities that existing embedding methods can capture. In this paper, we are particularly concerned with the types of (hard) rules that such models are capable of representing. To allow us to precisely characterize what regularities are captured by a given embedding, we will consider hard thresholds $\lambda_R$ such that a triple $(e,R,f)$ is considered valid iff $s_R(\mathbf{e},\mathbf{f})\leq \lambda_R$. In fact, KG embeddings are often learned using a max-margin loss function which directly encodes this assumption. The vector space representation of a given relation $R$ can then be viewed as a region $\eta(R)$ in $\mathbb{R}^{2n}$, defined as follows:
$$
\eta(R) = \{\mathbf{e} \oplus \mathbf{f} \,|\, s_R(\mathbf{e},\mathbf{f}) \leq \lambda_R\}
$$
where we write $\oplus$ for vector concatenation. In particular, note that $(e,R,f)$ is considered a valid triple iff $\mathbf{e} \oplus \mathbf{f} \in \eta_R$. Figure \ref{fig:RegionKGview} illustrates the types of regions that are obtained for the TransE and DistMult models.

\begin{figure}
    \centering
    \begin{subfigure}[b]{0.49\linewidth}
   \includegraphics[width=120pt]{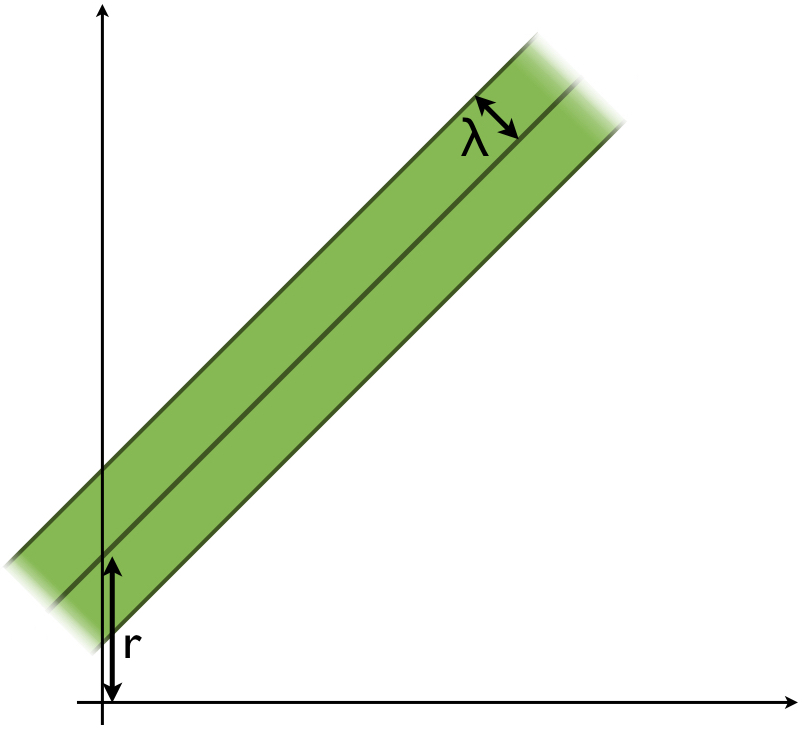}
   \caption{TransE}
   \end{subfigure}
   \begin{subfigure}[b]{0.49\linewidth}
    \includegraphics[width=120pt]{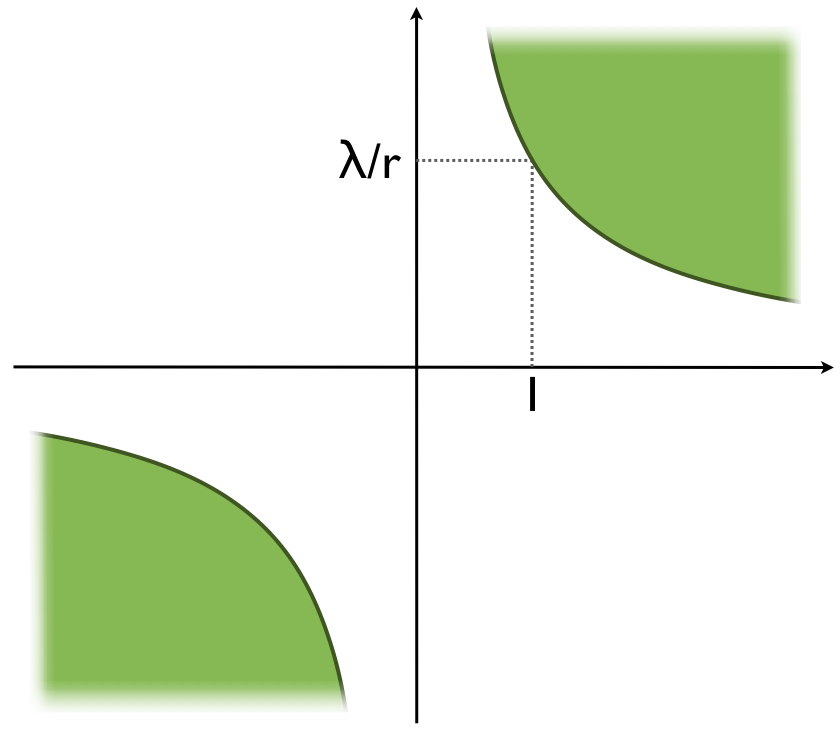}
    \caption{DistMult}
    \end{subfigure}
    \caption{Region based view of knowledge graph embedding models.}
    \label{fig:RegionKGview}
\end{figure}

This region-based view will allow us to study properties of knowledge graph embedding models in a general way, by linking the kind of regularities that a given embedding model can represent to the kind of regions that it considers. Furthermore, the region-based view of knowledge graph embedding also has a number of practical advantages. First, such regions can naturally be defined for relations of any arity, while the standard formulations of knowledge graph embedding models are typically restricted to binary relations. Second, and perhaps more fundamentally, it suggests a natural way to take into account prior knowledge about dependencies between different relations. In particular, for many knowledge graphs, some kind of ontology is available, which can be viewed as a set of rules describing such dependencies. These rules naturally translate to spatial constraints on the regions $\eta_R$. For instance, if we know that $R(X,Y) \rightarrow S(X,Y)$ holds, it would be natural to require that $\eta_R \subseteq \eta_S$. If a knowledge graph embedding captures the rules of a given ontology in this sense, we will call it a \emph{geometric model} of the ontology. By requiring that the embedding of a knowledge graph should be a geometric model of a given ontology, we can effectively exploit the knowledge contained in that ontology to obtain higher-quality representations. 
Indeed, there exists empirical support for the usefulness of  (soft) rules for learning embeddings~\cite{DBLP:conf/emnlp/DemeesterRR16,Niepert16,WangC16,MinerviniDRR17}.
A related advantage of geometric models over standard KG embeddings is that the set of triples which is considered valid based on the embedding is guaranteed to be logically consistent and deductively closed (relative to the given ontology). Finally, since geometric models are essentially ``ontology embeddings'', they could be used for ontology completion, i.e.\ for finding plausible missing rules from the given ontology similar to how standard KG embedding models are used to find plausible missing triples from a KG.

\smallskip\noindent {\bf  Objective and Contributions.} 
The main aim of this paper is to analyze  the implications of choosing a particular type of geometric representation on the  the kinds of logical dependencies that  can be faithfully embedded. To the best of our knowledge, this paper is the first to investigate the expressivity of embedding models in the latter sense.

Our technical contribution is two-fold.
First, we  show that the most popular approaches to KG embedding are actually not compatible with the notion of a geometric model. For instance, as we will see, the representations obtained by DistMult (and its variants) 
can only model a very restricted class of subsumption hierarchies. This is problematic, as it not only means that we cannot impose the rules from a given ontology for learning knowledge graph embeddings, but also that the types of regularities that are captured by such rules cannot be learned from data. 

Second, to overcome the above shortcoming, we propose a novel framework in which relations are modelled as arbitrary convex regions in $\mathbb{R}^k$, with $k$ the arity of the relation. We particularly show that convex geometric models can properly express the class of  so-called \emph{quasi-chained existential rules}. While convex geometric models are thus still not general enough to capture arbitrary existential rules, this particular class does subsume several key ontology languages based on description logics and important fragments of existential rules. Finally, we show that to capture arbitrary existential rules, a further generalization is needed, based on a non-linear transformation of the vector concatenations.




\medskip Missing proofs can be found in a extended version with an appendix  under \url{https://tinyurl.com/yb696el8}


\section{Background}\label{secBackground}

In this section we provide some background on knowledge graph embedding and existential rules.

\subsection{Knowledge Graph Embedding}\label{sec:KGEm}

A wide variety of KG embedding methods have already been proposed, varying mostly in the type of scoring function that is used. One popular class of methods was inspired by the TransE model. In particular, several authors have proposed generalizations of TransE to address the issue that TransE is only suitable for one-to-one relations \cite{TransH,TransR}: if $(e,R,f)$ and $(e,R,g)$ were both in the KG, then the TransE training objective would encourage $f$ and $g$ to be represented as identical vectors. The main idea behind these generalizations is to map the entities to a relation-specific subspace before applying the translation. For instance, the TransR scoring function is given by $s_R(\mathbf{e},\mathbf{f}) = d(M_r \mathbf{e} + \mathbf{r}, M_r \mathbf{f})$, where $M_r$ is an $n\times n$ matrix \cite{TransR}. As a further generalization, in STransE a different matrix is used for the head entity $e$ and for the tail entity $f$, leading to the scoring function $s_R(\mathbf{e},\mathbf{f}) = d(M_r^h \mathbf{e} + \mathbf{r}, M_r^t \mathbf{f})$ \cite{STransE}.

A key limitation of DistMult (cf.\ Section~\ref{sec:intro}) is the fact that it can only model symmetric relations. A natural solution is to represent each entity $e$ using two vectors $\mathbf{e_h}$ and $\mathbf{e_t}$, which are respectively used when $e$ appears in the head (i.e.\ as the first argument) or in the tail (i.e.\ as the second argument). In other words, the scoring function then becomes $s_R = - \sum_i e_i^h r_i f_i^t$, where we write $\mathbf{e_h}=(e_1^h,...,e_n^h)$ and similar for $\mathbf{f_t}$. The problem with this approach is that there is no connection at all between $\mathbf{e_h}$ and $\mathbf{e_t}$, which makes learning suitable representations more difficult. To address this, the ComplEx model \cite{ComplEx} represents entities and relations as vectors of complex numbers, such that $\mathbf{e_t}$ is the component-wise conjugate of $\mathbf{e_h}$. Let us write $\langle \mathbf{a},\mathbf{b},\mathbf{c} \rangle$, with $\mathbf{a},\mathbf{b},\mathbf{c}\in\mathbb{R}^n$, for the bilinear product $\sum_{i=1}^n a_ib_ic_i$. Furthermore, for a complex vector $\mathbf{a} \in \mathbb{C}^n$, we write $\textit{re}(\mathbf{a})$ and $\textit{im}(\mathbf{a})$ for the real and imaginary parts of $\mathbf{a}$ respectively. It can be shown \cite{kazemi2018simple} that the scoring function of ComplEx is equivalent to 
\begin{align*}
s_R(e,f) =& - \langle \textit{re}(\mathbf{e}),\textit{re}(\mathbf{r}),\textit{re}(\mathbf{f})\rangle - \langle \textit{re}(\mathbf{e}),\textit{im}(\mathbf{r}),\textit{im}(\mathbf{f})\rangle  \\ &-\langle\textit{im}(\mathbf{e}),\textit{re}(\mathbf{r}),\textit{im}(\mathbf{f})\rangle + \langle \textit{im}(\mathbf{e}),\textit{im}(\mathbf{r}),\textit{re}(\mathbf{f})\rangle\notag
\end{align*}
 Recently, in \cite{kazemi2018simple}, a simpler approach was proposed to address the symmetry issue of DistMult. The proposed model, called SimplE, avoids the use of complex vectors. In this model, the  DistMult scoring function is used with a separate representation for head and tail mentions of an entity, but for each triple $(e,R,f)$ in the knowledge graph,  the triple $(f,R^{-1},e)$ is additionally considered. This means that each such triple affects the representation of $\mathbf{e_h}$, $\mathbf{e_t}$, $\mathbf{f_h}$ and $\mathbf{f_t}$, and in this way, the main drawback of using separate representations for head and tail mentions is avoided. 

The RESCAL model \cite{nickel2011three} uses a bilinear scoring function $s_R(e,f)= - \mathbf{e}^T M_r\mathbf{f}$, where the relation $R$ is modelled as an $n \times n$ matrix $M_r$. Note that DistMult can be seen as a special case of RESCAL in which only diagonal matrices are considered. Similarly, it is easy to verify that ComplEx also corresponds to a bilinear model, with a slightly different restriction on the type of considered matrices. Without any restriction on the type of considered matrices, however, the RESCAL model is prone to overfitting. The neural tensor model (NTN), proposed in \cite{NTN} further generalizes RESCAL by using a two-layer neural network formulation, but similarly tends to suffer from overfitting in practice.

\smallskip\noindent \textbf{Expressivity.}
Intuitively, the reason why KG embedding models are able to identify plausible triples is because they can only represent knowledge graphs that exhibit a certain type of regularity. They can be seen as a particular class of dimensionality reduction methods: the lower the number of dimensions $n$, the stronger the KG model enforces some notion of regularity (where the exact kind of regularity depends on the chosen KG embedding model). However, when the number of dimensions is sufficiently high, it is desirable that any KG can be represented in an exact way, in the following sense: for any given set of triples $P=\{(e_1,R_1,f_1),..., (e_m,R_m,f_m)\}$ which are known to be valid and any set of triples $N=\{(e_{m+1},R_{m+1},f_{m+1}),...(e_k,R_k,f_k)\}$ which are known to be false, given a sufficiently high number of dimensions $n$, there always exists an embedding and thresholds $\lambda_R$ such that
\begin{align}
&\forall (e,R,f)\in P \,.\, s_{R}(\mathbf{e},\mathbf{f})\leq\lambda_R\label{eqFullyExpressiveA}\\
&\forall (e,R,f)\in N \,.\, s_{R}(\mathbf{e},\mathbf{f})> \lambda_R\label{eqFullyExpressiveB}
\end{align}
A KG embedding model is called \emph{fully expressive} \cite{kazemi2018simple} 
if \eqref{eqFullyExpressiveA}--\eqref{eqFullyExpressiveB} can be guaranteed for any disjoint sets of triples $P$ and $N$. If a KG embedding model is not fully expressive, it means that there are \textit{a priori} constraints on the kind of knowledge graphs that can be represented, which can lead to unwarranted inferences when using this model for KG completion. In contrast, for fully expressive models, the types of KGs that can be represented is determined by the number of dimensions, which is typically seen as a hyperparameter, i.e.\ this number is tuned separately for each KG to avoid (too many) unwarranted inferences.

It turns out that translation based methods such as TransE, STransE and related generalizations are not fully expressive \cite{kazemi2018simple}, and in fact put rather severe restrictions on the types of relations that can be represented in the sense of \eqref{eqFullyExpressiveA}--\eqref{eqFullyExpressiveB}. For instance, it was shown in \cite{kazemi2018simple} that translation based methods can only fully represent a knowledge graph $G$  if each of its relations $R$ satisfies the following properties for every subset of entities $S$:
\begin{enumerate}
    \item If $R$ is reflexive over $S$, then $R$ is also symmetric and transitive over $S$.
    \item If $\forall s\in S\,.\,(e,R,s)\in G$  and  $\exists s\in S\,.\,(f,R,s)\in G$ then we also have  $\forall s\in S\,.\,(f,R,s)\in G$.
\end{enumerate} 
 However, both ComplEx and SimplE have been shown to be fully expressive.

\smallskip\noindent \textbf{Modelling Textual Descriptions.}
Several methods have been proposed which aim to learn better knowledge graph embeddings by exploiting textual descriptions of entities \cite{DBLP:conf/emnlp/ZhongZWWC15,xie2016representation,SSP} or by extracting information about the relationship between two entities from sentences mentioning both of them \cite{toutanova2015representing}. Apart from improving the overall quality of the embeddings, a key advantage of such approaches is that they allow us to predict plausible triples involving entities which do not occur in the initial knowledge graph. 

\subsection{Existential Rules} 

Existential rules (a.k.a.\ Datalog$^\pm$) are a family of rule-based formalisms for  modelling ontologies.  An existential rule is a datalog-like rule with existentially quantified variables in the head, i.e.\ it extends traditional datalog with \emph{value invention}.  
As a consequence, existential rules describe not only constraints on the currently available knowledge or data, but also \emph{intentional} knowledge about the domain of discourse. 
The appeal of existential rules  comes from the fact that they are extensions of 
the prominent  $\mathcal{EL}$ and \textsl{DL-Lite} families of description logics (DLs)~\cite{DLBook}. For instance,  existential rules  can describe $k$-ary relations, while DLs are constrained to unary and binary relations.  

\noindent {\bf Syntax.} Let ${\bf C}, {\bf N}$ and ${\bf V}$ be   infinite disjoint sets of \emph{constants, (labelled) nulls} and \emph{variables}, respectively. A \emph{term} $t$ is an element in  ${\bf C} \cup {\bf N}  \cup{\bf V}$; an \emph{atom} $\alpha$ is an expression of the form $R(t_1, \ldots, t_n)$, where $R$ is  a \emph{relation name (or predicate)} with \emph{arity} $n$ and terms $t_i$. We denote with $\mathsf{terms}(\alpha)$ the set $\{t_1, \ldots, t_n\}$ and with  $\mathsf{vars}(\alpha)$ the set $\mathsf{terms}(\alpha) \cap {\bf V}$. 
An \emph{existential rule} $\sigma$ is an expression of the form
\begin{equation}\label{eqerule}
 B_1 \land \ldots \land B_n \rightarrow \exists X_1, \ldots ,X_j.H_1 \land \ldots \land H_k, 
 \end{equation}
 where $B_1, \ldots B_n$ for $n \geq 0$, 
 $H_1, \ldots, H_k$ for $k \geq 1$, 
 are atoms with 
 terms in ${\bf C} \cup {\bf V}$ and $X_m \in \mathbf V$ for  $1 \leq m\leq j$. From here on, we assume w.l.o.g that $k=1$~\cite{CaliGK13}; we omit in this case the subindex. 
 We use $\mathsf{body}(\sigma)$ and $\mathsf{head}(\sigma)$ to refer to $\{B_1, \ldots, B_n\}$ and $\{H\}$, respectively. We call $\mathsf{evars}(\sigma)=\{X_1, \ldots, X_j\}$ the \emph{existential variables of $\sigma$}; if $\mathsf{evars}(\sigma) = \emptyset$, $\sigma$ is called a \emph{datalog rule}. We further allow \emph{negative constraints} (or simply \emph{constraints}) which are expressions of the form    
 %
 $B_1 \land \ldots \land B_n \rightarrow \bot,$ 
 %
 where the $B_i$s are as above and $\bot$ denotes the truth constant \emph{false}. A finite set $\Sigma$ of existential rules and constraints is called an \emph{ontology}; and a \emph{datalog program} if $\Sigma$ contains only datalog rules and constraints. 

\smallskip 
  Let $\mathfrak R$ be a set of relation names. A \emph{database $D$} is a finite set of \emph{facts} over $\mathfrak R$, i.e.\ atoms with terms in $\bf C$.  A \emph{knowledge base (KB)} $\mathcal K$ is a pair $(\Sigma, D)$ with $\Sigma$ an ontology (or a datalog program) and $D$ a database. 
 
  \smallskip\noindent \textbf{Semantics.}
  An \emph{interpretation $\mathcal I$ over $\mathfrak{R}$} is a (possibly infinite) set of atoms over $\mathfrak R$ with terms in ${\bf C} \cup {\bf N}$.  An interpretation $\mathcal I$ is a \emph{model  of $\Sigma$} if it satisfies all rules and constraints: $\{B_1, \ldots, B_n \}\subseteq \mathcal I$ implies $\{H\} \subseteq \mathcal I$ for every $\sigma$ defined as above in $\Sigma$, where existential variables can be witnessed by constants or labelled nulls, and $\{B_1, \ldots, B_n\} \not \subseteq \mathcal I$ for all constraints  defined as above in $\Sigma$;  it is a \emph{model of a database} $D$ if $D \subseteq \mathcal I$; it is a model of a KB $\mathcal K = (\Sigma, D)$, written $\mathcal I \models \mathcal K$, if it is a model of $\Sigma$ and $D$. We say that a KB $\mathcal K$ is satisfiable if it has a model. We  refer to elements in ${\bf C} \cup {\bf N}$ simply as \emph{objects}, call atoms $\alpha$  containing only objects as terms \emph{ground}, and denote with
  $\mathfrak O(\mathcal I)$ the set of all objects occurring in $\mathcal I$.
  
  \begin{example}\label{Ex:1}
  Let  $D =\{\textit{Wife}(\textit{anna}), \textit{Wife}(\textit{marie})\}$ be a database and $\Sigma$ an ontology composed by the rules:
  %
  %
 \begin{align}
\textit{Wife}(X) \wedge \textit{Married}(X,Y) \rightarrow \textit{Husband}(Y)\label{rul:0}\\
%
\textit{Wife}(Y) \rightarrow \exists X\,.\, \textit{Husband}(X) \wedge \textit{Married}(X,Y) \label{rul:1}\\
%
%
 \textit{Husband}(X) \land \textit{Wife}(X) \rightarrow \bot\label{rul:2}
  \end{align}

\noindent Then, an example of a model  of  $\mathcal K = (\Sigma, D)$ is the set of atoms  $D \cup \{\textit{Husband}(o_1),\textit{Husband}(o_2),  \textit{Married}(o_1, \textit{anna}),$ $ \textit{Married}(o_2, \textit{marie}) \}$ where $o_i$ are labelled nulls. Note that e.g.\
$\{\textit{Married}(\textit{anna},\textit{marie}), \textit{Husband}(\textit{marie}) \}$ is not included in any model of $\mathcal K$ due to~\eqref{rul:2}.
  \end{example}
  

\smallskip\noindent \textbf{Notation.}  We use $a, b,c, a_1, \ldots$ for constants and $X,Y,Z, X_1, \ldots$ for variables. 
We write $\mathfrak{R}_k$ for the set of relation names from $\mathfrak{R}$ which have arity $k$.
Given a  KB $\mathcal K$, we use $\bf C(\mathcal K)$, $\mathfrak R(\mathcal K)$ and $\mathfrak{R}_k(\mathcal K)$ to denote, respectively,  the set of constants, relation names and $k$-ary relation names occurring in  $\mathcal K$.   For vectors $\mathbf{x}=(x_1,...,x_m)$ and $\mathbf{y}=(y_1,...,y_k)$, we denote their concatenation by $\mathbf{x} \oplus \mathbf{y} = (x_1,...,x_m,y_1,...,y_k)$. 


\section{Geometric Models}\label{secProblemFormulation}


In this section, we formalize how regions can be used for representing relations, and what it means for such representations to satisfy a given knowledge base. The resulting formalization will allow us to study the expressivity of knowledge graph embedding models. It will also provide the foundations of a framework for  knowledge base completion, based on embeddings that are jointly learned from a given database and ontology. We first define the geometric counterpart of an interpretation.
 
\begin{definition}[Geometric interpretation]\label{defGeometricInterpretation}
Let $\mathfrak R$ be a set of relation names and ${\bf X}\subseteq {\bf C} \cup {\bf N}$ be a  set of objects. An \emph{$m$-dimensional geometric interpretation $\eta$ of $(\mathfrak{R},{\bf X})$} assigns to each $k$-ary relation name $R$ from $\mathfrak{R}$ a region 
$\eta(R) \subseteq \mathbb{R}^{k\cdot m}$ and to each object $o$ from ${\bf X}$ a vector $\eta( o) \in \mathbb{R}^m$.
\end{definition}


\begin{figure}
    \centering
    \includegraphics[width=0.6\columnwidth]{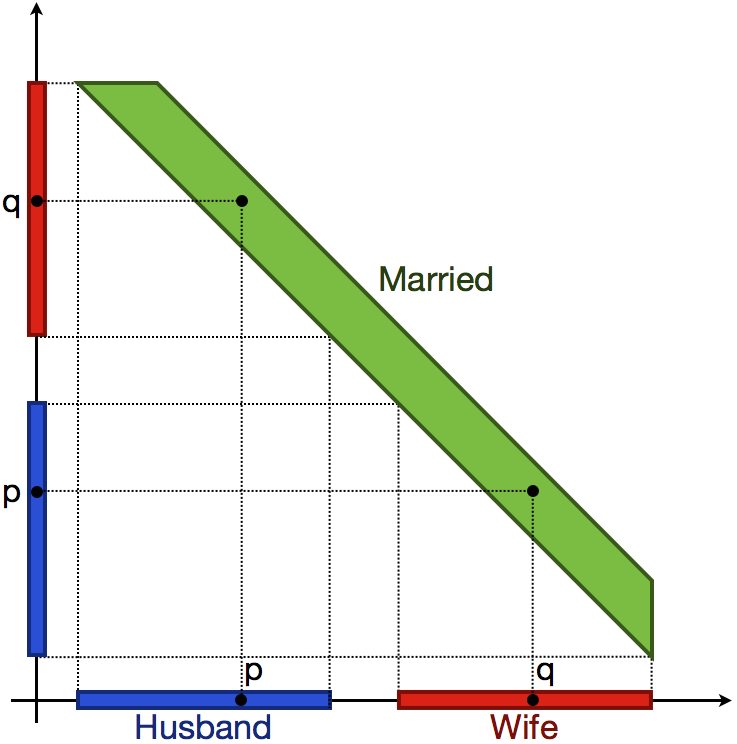}
    \caption{A geometric model of the KB from Example \ref{Ex:1}.}
    \label{fig:Married}
\end{figure}
\noindent An example of a 1-dimensional geometric interpretation of $(\{\textit{Husband},\textit{Wife},\textit{Married}\},\{p,q\})$ is depicted in Figure \ref{fig:Married}. Note that in this case, the unary predicates \textit{Husband} and \textit{Wife} are represented as intervals, whereas the binary predicate \textit{Married} is represented as a convex polygon in $\mathbb{R}^2$. We now define what it means for a geometric interpretation to satisfy a ground atom.

\begin{definition}[Satisfaction of ground atoms]\label{def:sat}
Let $\eta$ be an $m$-dimensional geometric interpretation of $(\mathfrak{R},{\bf X})$, $R \in \mathfrak{R}_k$ and  $o_1,...,o_k\in {\bf X}$. We say that $\eta$ \emph{satisfies} a ground atom $R(o_1,..., o_k)$, written $\eta \models R(o_1,..., o_k)$, if $\eta(o_1) \oplus ... \oplus \eta(o_k) \in \eta(R)$. 
\end{definition}

\noindent For ${\bf Y}\subseteq {\bf X}$, we will write $\phi({\bf Y},\eta)$ for the set of ground atoms over {\bf Y} which are satisfied by $\eta$, i.e.:
\begin{align*}
 \{R(o_1,...,o_k) \,|\, &R\in \mathfrak{R}_k, o_1,...,o_k \in {\bf Y},\eta \models R( o_1,...,o_k)\}
\end{align*}
If ${\bf Y} = {\bf X}$, we also abbreviate $\phi({\bf Y},\eta)$ as $\phi(\eta)$. For example, if $\eta$ is the geometric interpretation from Figure \ref{fig:Married}, we find:
\begin{align*}
\phi(\eta) {=} \{\textit{Husband}(p),\textit{Wife}(p),\textit{Married}(p,q),\textit{Married}(q,p)\}     
\end{align*}

\noindent The notion of satisfaction in Definition~\ref{def:sat} can be extended to propositional combinations of ground atoms in the usual way. Specifically, $\eta$ satisfies a rule $B_1 \wedge ... \wedge B_n \rightarrow C$, with $B_1,...B_n,C$ ground atoms, if $\eta\models C$ or $\{B_1,...,B_n\} \not\subseteq \phi(\eta)$. Now consider the case of a non-ground rule, e.g.:
\begin{align}\label{eqExampleHornRule}
R(X,Y) \wedge S(Y,Z) \rightarrow T(X,Z)
\end{align}
Intuitively what we want to encode is whether $\eta$ satisfies every possible grounding of this rule, i.e.\ whether for any objects $o_x,o_y,o_z$ such that $\eta(o_x)\oplus \eta(o_y) \in \eta(R)$ and $\eta(o_x)\oplus \eta(o_z) \in \eta(S)$ it holds that $\eta(o_x)\oplus \eta(o_z) \in \eta(T)$. However, since an important aim of vector space representations is to enable inductive generalizations, this property of $\eta$ should not only hold for the constants occurring in the given knowledge base, but also for any possible constants whose representation we might learn from external sources \cite{DBLP:conf/emnlp/ZhongZWWC15,xie2016representation,SSP}. As a result, we need to impose the following stronger requirement for $\eta$ to satisfy \eqref{eqExampleHornRule}: for every $\mathbf{x},\mathbf{y},\mathbf{z} \in \mathbb{R}^m$ such that $\mathbf{x} \oplus \mathbf{y} \in \eta(R)$ and $\mathbf{y} \oplus \mathbf{z} \in \eta(S)$, it has to hold that $\mathbf{x} \oplus \mathbf{z} \in \eta(S)$. Note that a rule like \eqref{eqExampleHornRule} thus naturally translates into a spatial constraint on the representation of the relation names. Finally, let us consider an existential rule:
\begin{align}\label{eqExampleHornRule1}
R(X,Y) \rightarrow \exists Z\,.\, S(X,Y,Z)
\end{align}
For $\eta$ to be a model of this rule, we require that for every $\mathbf{x},\mathbf{y}\in \mathbb{R}^m$ such that $\mathbf{x}\oplus \mathbf{y} \in \eta(R)$ there has to exist a $\mathbf{z}\in \mathbb{R}^m$ such that $\mathbf{x}\oplus \mathbf{y}\oplus \mathbf{z} \in \eta(S)$. These intuitions are formalized in the following definition of a geometric model.

\begin{definition}\label{defGeometricModel} 
Let $\mathcal{K} = (\Sigma, D)$ be a knowledge base and $\mathfrak O$ a (possibly infinite) set of objects. 
A geometric interpretation $\eta$ of $(\mathfrak{R}(\mathcal K),\mathfrak O)$ is called an \emph{$m$-dimensional geometric model of $\mathcal{K}$} if 
\begin{enumerate}
    \item  $\phi(\eta)= \mathcal M$, for some model  $\mathcal M$ of $\mathcal K$, and 
    \item
    for any set of points $\{\mathbf{v_1},...,\mathbf{v_n}\} \subseteq \mathbb{R}^m$, $\eta$ can be extended to a geometric interpretation  $\eta^*$ 
    such that
    \begin{enumerate}
    
        \item for each $i\in \{1,...,n\}$ there is a fresh constant $c_i \in \mathbf{C} \setminus \mathfrak O (\mathcal M)$ such that $\eta^*(c_i)=\mathbf{v_i}$, 
        \item $\phi(\eta^*)= \mathcal M'$ for some model  $\mathcal M'$ of  $(\Sigma, D \cup \phi(\eta^*))$.
    \end{enumerate}
\end{enumerate}

 
\end{definition}
\noindent
The first point in Definition \ref{defGeometricModel} ensures that we can view geometric models as geometric representations of classical models.
The second point in Definition \ref{defGeometricModel} ensures that we can use geometric models to introduce objects from external sources, without introducing any inconsistencies. It captures the fact that the logical dependencies between the relation names encoded in $\Sigma$ should be properly captured by the spatial relationships between their geometric representations, as was illustrated in \eqref{eqExampleHornRule1}.  Naturally, $\mathcal M'$ might contain additional (in comparison to $\mathcal M$) nulls to witness existential demands over the new constants. For datalog programs, however, $\eta^*$ is completely determined by $\eta$ and the fact that $\eta^*(c_i)=\mathbf{v_i}$ for $1\leq i\leq n$, that is, only Conditions 1 is necessary.
For instance, the geometric interpretation depicted in Figure \ref{fig:Married} is a geometric model of the rules from Example \ref{Ex:1}.


\smallskip \noindent 
{\bf Practical Significance of Geometric Models.} The framework presented in this section offers several key advantages over standard KG embedding methods. First, it allows us to take into account a given ontology when learning the vector space representations, which should lead to higher-quality representations, and thus more faithful predictions, in cases where such an ontology is available. Also note that the region based framework can be applied to relations of any arity. Conversely, the framework also naturally allows us to obtain plausible rules from a learned geometric model, as this geometric model may (approximately) satisfy rules which are not entailed by the given ontology. Moreover, our framework allows for a tight integration of deductive and inductive modes of inference, as the  facts and rules that are satisfied by a geometric model are deductively closed and logically consistent. 

\smallskip
\noindent{\bf Modelling Relations as Convex Regions.} 
While, in principle, arbitrary subsets of $\mathbb{R}^{k\cdot m}$ can be used for representing $k$-ary relations, in practice the type of considered regions will need to be restricted in some way. This is needed to ensure that the regions can be efficiently learned from data and can be represented compactly. Moreover, the purpose of using vector space representations is to enable inductive inferences, but this is only possible if we impose sufficiently strong regularity conditions on the representations. For this reason, in this paper we will particularly focus on \emph{convex} geometric interpretations, i.e.\ geometric interpretations in which each relation is represented using a convex region. While this may seem like a strong assumption, the vast majority of existing KG embedding models in fact learn representations that correspond to such convex geometric interpretations. Moreover, when learning regions in high-dimensional spaces, strong assumptions such as convexity are needed to avoid overfitting, especially if the amount of training data is limited. Finally, the use of convex regions is also in accordance with cognitive models such as conceptual spaces \cite{Gardenfors:conceptualSpaces}, and more broadly with experimental findings in psychology, especially in cases where we are presented with few training examples \cite{ROSSEEL2002178}.

\medskip 
One may wonder whether it is possible to go further and restrict attention e.g.\ to convex models that are induced by vector translations. For instance, we could consider regions which are such that $\mathbf{x}\oplus \mathbf{y} \in \eta(R)$ means that we also have $\mathbf{u}\oplus \mathbf{v} \in \eta(R)$ whenever $\mathbf{y}-\mathbf{x} = \mathbf{v}-\mathbf{u}$, i.e.\ only the vector difference between $\mathbf{x}$ and $\mathbf{y}$ matters. Note that TransE and most of its generalizations aim to learn representations that correspond to such regions. Alas,  as the next example illustrates, such translation-based regions do not have the desired generality, in the sense that they cannot properly capture even simple rules.
\begin{example}\label{exampleIntro}
For instance, consider rules \eqref{rul:1}-\eqref{rul:2} in Example~\ref{Ex:1}.
For the ease of presentation, let us write $C_H$ for $\eta(\textit{Husband})$ and $C_W$ for $\eta(\textit{Wife})$, i.e.\ we assume that $\textit{Husband}(a)$ holds for a constant $a$ iff $\mathbf{a} \in C_H$. Let us furthermore assume that $C_H$ and $C_W$ are convex. We will also assume that a translation-based region is used to represent \textit{Married}. Note that in such a case, the region $\eta(\textit{Married})$ in $\mathbb{R}^{2n}$ can be characterized by a region $C_M$ in $\mathbb{R}^n$ such that $\textit{Married}(a,b)$ holds iff $\mathbf{b}-\mathbf{a} \in C_M$. To capture the logical dependencies encoded by the  rules, the following spatial relationships would then have to hold:
\begin{align}
C_H &\supseteq \{\mathbf{p} + \mathbf{r} \,|\, \mathbf{p} \in C_W, \mathbf{r} \in C_M\} \label{eqSpatialConstraint1}\\
C_W &\subseteq \{\mathbf{p} + \mathbf{r} \,|\, \mathbf{p} \in C_H, \mathbf{r} \in C_M\} \label{eqSpatialConstraint2}
\end{align}
However, \eqref{eqSpatialConstraint1} and \eqref{eqSpatialConstraint2} entail\footnote{Indeed, suppose that $\textbf{q} \in C_W$, then by \eqref{eqSpatialConstraint2} there must exist some $\mathbf{p} \in C_H$ and $\mathbf{r} \in C_M$ such that $\mathbf{q} = \mathbf{p} + \mathbf{r}$. By \eqref{eqSpatialConstraint1} we furthermore have $\mathbf{q} + \mathbf{r} \in C_H$. Since $\mathbf{q}$ is between $\mathbf{p}$ and $\mathbf{q} + \mathbf{r}$, both of which belong to $C_H$, by the convexity of $C_H$ it follows that $q\in C_H$.} that $C_W \subseteq C_H$. 
Since, by rule~\eqref{rul:0}, the concepts \textit{Wife} and \textit{Husband} are disjoint, we would have to choose $C_W=C_H=\emptyset$ and would not be able to represent any instances of these concepts.
\end{example}

\noindent It is perhaps not surprising that translation based representations are not suitable for modelling rules, since they are already known not to be fully expressive in the sense of \eqref{eqFullyExpressiveA}--\eqref{eqFullyExpressiveB}. As we discussed in Section \ref{sec:KGEm}, there are several bilinear models which are known to be fully expressive, and which may thus be thought of as more promising candidates for defining suitable types of regions. We  address whether bilinear models are able to represent ontologies in the next section.


\section{Limitations of Bilinear Models}

As already mentioned, translation based approaches incur rather severe limitations on the kinds of databases and ontologies that can be modelled. In this section, we show that while bilinear models are fully expressive, and can thus model any database, they are not suitable for modelling ontologies. This motivates the need for novel embedding methods, which are better suited at modelling ontologies; this will be the focus of the next section.

\smallskip Let us consider the following common type of rules:
\begin{align}
R(X,Y) \rightarrow S(X,Y) \label{eqArg1Rule}
\end{align}
and  a bilinear model in which each relation name $R$ is associated with an $n\times n$ matrix $M_r$ and a threshold $\lambda_r$. We then say that \eqref{eqArg1Rule} is satisfied if for each $\mathbf{e},\mathbf{f}\in \mathbb{R}^n$, it holds that:
\begin{align}\label{eqArg2RuleBilinear}
(\mathbf{e}^T  M_r  \mathbf{f} \geq \lambda_r) \Rightarrow (\mathbf{e}^T M_s \mathbf{f} \geq \lambda_s)
\end{align}
where $\mathbf{e}^T$ denotes the transpose of $\mathbf e$.
It turns out that bilinear models are severely limited in how they can model sets of rules of the form \eqref{eqArg1Rule}. This limitation stems from the following result.

\begin{proposition}\label{propLimitationBilinear}
Suppose that \eqref{eqArg2RuleBilinear} is satisfied for the matrices $M_r,M_s$ and some thresholds $\lambda_r,\lambda_s$. Then there exists some $\alpha\geq 0$ such that $M_r = \alpha M_s$.
\end{proposition}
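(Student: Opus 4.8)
Condition \eqref{eqArg2RuleBilinear} says precisely that the super\-level region $\eta(R)=\{\mathbf{e}\oplus\mathbf{f}\in\mathbb{R}^{2n}\,:\,\mathbf{e}^{T}M_{r}\mathbf{f}\geq\lambda_{r}\}$ is contained in $\eta(S)$, and my plan is to exploit this by ``slicing'' on the first argument and passing to a limit to eliminate the thresholds. First I would dispose of the degenerate cases: $M_{r}=0$ gives the claim with $\alpha=0$, and if $M_{s}=0$ then, since a non\-zero bilinear form takes arbitrarily large values, \eqref{eqArg2RuleBilinear} forces $\lambda_{s}\leq0$, i.e.\ $S$ holds for every pair of objects --- a situation one should exclude or read as vacuous, since $M_{r}=\alpha M_{s}$ genuinely fails there. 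So I assume $M_{r}\neq0$ and $M_{s}\neq0$ (equivalently, that $\eta(S)$ is a proper, non\-empty region), and aim to show $M_{s}=\beta M_{r}$ for a single scalar $\beta>0$, which yields the claim with $\alpha=1/\beta$.

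For fixed $\mathbf{e}$, write $\ell^{r}_{\mathbf{e}}(\mathbf{f})=\mathbf{e}^{T}M_{r}\mathbf{f}=(M_{r}^{T}\mathbf{e})^{T}\mathbf{f}$ and similarly $\ell^{s}_{\mathbf{e}}$. I would first take any $\mathbf{e}$ with $M_{r}^{T}\mathbf{e}\neq0$, so that $\ell^{r}_{\mathbf{e}}$ is a non\-zero linear functional of $\mathbf{f}$. If $\ell^{r}_{\mathbf{e}}(\mathbf{f})>0$ then $\mathbf{e}\oplus t\mathbf{f}\in\eta(R)$ for all large $t>0$, hence $t\,\ell^{s}_{\mathbf{e}}(\mathbf{f})\geq\lambda_{s}$; dividing by $t$ and letting $t\to\infty$ gives $\ell^{s}_{\mathbf{e}}(\mathbf{f})\geq0$. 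Thus the open half\-space $\{\ell^{r}_{\mathbf{e}}>0\}$ lies in the closed set $\{\ell^{s}_{\mathbf{e}}\geq0\}$; taking closures and applying the resulting implication to $\mathbf{f}$ and $-\mathbf{f}$ shows $\ell^{s}_{\mathbf{e}}$ vanishes on the hyperplane $\ker\ell^{r}_{\mathbf{e}}$, so $\ell^{s}_{\mathbf{e}}=c(\mathbf{e})\,\ell^{r}_{\mathbf{e}}$ for some scalar, and testing on an $\mathbf{f}$ with $\ell^{r}_{\mathbf{e}}(\mathbf{f})>0$ forces $c(\mathbf{e})\geq0$. In matrix terms: $M_{s}^{T}\mathbf{e}=c(\mathbf{e})\,M_{r}^{T}\mathbf{e}$ with $c(\mathbf{e})\geq0$ for every $\mathbf{e}\notin\ker M_{r}^{T}$.

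Next I would show $\ker M_{r}^{T}\subseteq\ker M_{s}^{T}$. Given $\mathbf{e}_{1}$ with $M_{r}^{T}\mathbf{e}_{1}=0$, fix $\mathbf{e}_{0}$ with $\mathbf{u}:=M_{r}^{T}\mathbf{e}_{0}\neq0$; then $M_{r}^{T}(\mathbf{e}_{0}+t\mathbf{e}_{1})=\mathbf{u}$ for all $t\in\mathbb{R}$, so the previous paragraph gives $M_{s}^{T}\mathbf{e}_{0}+t\,M_{s}^{T}\mathbf{e}_{1}=c(t)\,\mathbf{u}$ with $c(t)\geq0$. Subtracting the $t=0$ instance shows that $M_{s}^{T}\mathbf{e}_{1}$ is a multiple of $\mathbf{u}$ and that $c$ is affine in $t$; an affine function that is non\-negative on all of $\mathbb{R}$ is constant, forcing $M_{s}^{T}\mathbf{e}_{1}=0$. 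Combining this with the previous paragraph, $M_{s}^{T}\mathbf{e}\in\operatorname{span}(M_{r}^{T}\mathbf{e})$ for \emph{every} $\mathbf{e}$; since $M_{r}^{T}\neq0$, the elementary fact that a linear map sending each vector into the line through its image under a fixed non\-zero map must be a scalar multiple of that map gives $M_{s}^{T}=\beta M_{r}^{T}$, i.e.\ $M_{s}=\beta M_{r}$, and $\beta=c(\mathbf{e})\geq0$ from the slice analysis; as $\beta\neq0$ (else $M_{s}=0$), we obtain $\beta>0$ and $M_{r}=(1/\beta)M_{s}$.

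I expect the only genuinely delicate point to be the kernel argument in the third paragraph: the per\-slice constants $c(\mathbf{e})$ are a priori uncoupled, and it is precisely the interplay of ``$c$ is affine in $t$'' with the sign constraint $c(\cdot)\geq0$ that prevents a stray direction of $\ker M_{r}^{T}$ from leaking into $M_{s}^{T}$. This one\-sided sign information is also what delivers $\alpha\geq0$ rather than mere proportionality up to sign --- a two\-way ``iff'' version of \eqref{eqArg2RuleBilinear} would not suffice. Everything else is routine: the limit step is just homogenisation of the threshold, and the final ``scalar multiple'' step is standard linear algebra (choose $\mathbf{e}_{0}$ with $M_{r}^{T}\mathbf{e}_{0}\neq0$, define $\beta$ there, and propagate using the vectors $\mathbf{e}_{0}+\mathbf{e}$).
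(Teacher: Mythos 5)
Your proof is correct, but it takes a genuinely different route from the paper's. The paper argues entry-by-entry: it constructs explicit test vectors $\mathbf{e},\mathbf{f}$ supported on one or two coordinates to show, in four lemmas, that the zero pattern of $M_s$ forces the same zero pattern on $M_r$ and that the surviving entries in any common row (and column) must share a single proportionality constant, and then assembles these local constants into a global $\alpha$. You instead work coordinate-free, slice by slice in $\mathbf{e}$: homogenising the threshold by scaling $\mathbf{f}$ shows the open half-space $\{\ell^r_{\mathbf{e}}>0\}$ sits inside $\{\ell^s_{\mathbf{e}}\geq 0\}$, hence $M_s^T\mathbf{e}=c(\mathbf{e})M_r^T\mathbf{e}$ with $c(\mathbf{e})\geq 0$; the affine-plus-nonnegative argument along $\mathbf{e}_0+t\mathbf{e}_1$ gives $\ker M_r^T\subseteq\ker M_s^T$; and the standard ``image in the line of the image'' lemma stitches the per-slice constants into one scalar. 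Each step checks out. Your approach buys two things: it makes the final globalisation of the constant fully explicit (the paper's concluding ``it follows that $M_r=\alpha M_s$'' quietly relies on the rows and columns of $M_s$ having overlapping support to propagate the per-row and per-column constants, which is delicate when, e.g., $M_s$ is diagonal), and it isolates exactly where the one-sidedness of the implication delivers $\alpha\geq 0$. You also correctly flag the one place where the proposition as literally stated breaks down, namely $M_s=0$, $M_r\neq 0$, $\lambda_s\leq 0$, where \eqref{eqArg2RuleBilinear} holds vacuously because $\top\rightarrow S(X,Y)$ is satisfied; the paper's proof silently assumes $M_s\neq 0$ (all its lemmas require some $s_{kl}\neq 0$), so your explicit treatment of that degenerate case is an improvement rather than a gap.
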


\noindent If $\alpha=0$ then the rule \eqref{eqArg1Rule} must be satisfied trivially, in the sense that the following rule is also satisfied for the matrix $M_s$ and threshold $\lambda_s$:
$$
\top \rightarrow S(X,Y)
$$
Let us consider the case where $\alpha>0$. Note that  for the thresholds $\lambda_r$ and $\lambda_s$ we only need to consider the values -1 and 1 since other thresholds can always be simulated by rescaling the matrices $M_r$ and $M_s$. Now assume that the following rules are given:
\begin{align*}
    R_1(X,Y) &\rightarrow S(X,Y)\\
     &...\\
    R_k(X,Y) &\rightarrow S(X,Y) 
\end{align*}
By Proposition \ref{propLimitationBilinear}, we know that for $i\in\{1,...,k\}$ there is some $\alpha_i$ such that  $M_{r_i} = \alpha_i M_{s_i}$. If $\lambda_{r_i}=\lambda_{r_j}$ we thus have that either the rule $R_i(X,Y) \rightarrow R_j(X,Y)$ or the rule $R_j(X,Y) \rightarrow R_i(X,Y)$ is satisfied (depending on whether $\alpha_i \geq 1$ and on whether $\lambda_{r_i}$ is 1 or -1).  
This means in particular that we can always find two rankings $R_{\tau_1},...,R_{\tau_p}$ and $R_{\sigma_1},...,R_{\sigma_q}$ such that $\{R_1,...,R_k\} = \{R_{\tau_1},...,R_{\tau_p},R_{\sigma_1},...,R_{\sigma_q}\}$ and:
\begin{align*}
\forall 1\leq i < p \,.\,   R_{\tau_i}(X,Y) &\rightarrow R_{\tau_{i+1}}(X,Y)\\
\forall 1\leq i < q \,.\,   R_{\sigma_i}(X,Y) &\rightarrow R_{\sigma_{i+1}}(X,Y)
\end{align*}
This clearly puts drastic restrictions on the type of subsumption hierarchies that can be modelled using bilinear models. Moreover, these limitations carry over to DistMult and ComplEx, as these are particular types of bilinear models. Due to the close links between DistMult and SimplE, it is also easy to see that the latter model has the same limitations.

In fact, the use of different vectors for head and tail mentions of entities in the SimplE model leads to even further limitations. To illustrate this, let us consider a rule of the following form:
\begin{align}
R(X,Y) \wedge S(Y,Z) \rightarrow T(X,Z) \label{eqArg2Rule}
\end{align}
where we say that the SimplE representation defined by the vectors $\mathbf{r},\mathbf{ri},\mathbf{s},\mathbf{si},\mathbf{t},\mathbf{ti}$ and corresponding thresholds $\lambda_r,\lambda_{ri},\lambda_s,\allowbreak\lambda_{si},\lambda_t,\lambda_{ti}$ satisfies \eqref{eqArg2Rule} if for all entity vectors $\mathbf{e_h},\mathbf{e_t},\mathbf{f_h},\mathbf{f_t},\mathbf{g_h},\mathbf{g_t}$ it holds that:
\begin{align}
&\langle\mathbf{e_h}, \mathbf{r}, \mathbf{f_t} \rangle \geq \lambda_r
\wedge \langle\mathbf{f_h}, \mathbf{ri}, \mathbf{e_t} \rangle \geq \lambda_{ri}\label{eqTripleImplicationCompositionRule}\\
&\wedge \langle\mathbf{f_h}, \mathbf{s}, \mathbf{g_t} \rangle \geq \lambda_s
\wedge \langle\mathbf{g_h}, \mathbf{si}, \mathbf{f_t} \rangle \geq \lambda_{si}\notag\\
&\quad\quad\quad\quad\Rightarrow \langle\mathbf{e_h}, \mathbf{t}, \mathbf{g_t} \rangle \geq \lambda_t
\wedge \langle\mathbf{g_h}, \mathbf{ti}, \mathbf{e_t} \rangle \geq \lambda_{ti}\notag
\end{align}
Then we can show the following result.
\begin{proposition}\label{propLimitationSimplE}
Suppose $\mathbf{r},\mathbf{ri},\mathbf{s},\allowbreak \mathbf{si},\allowbreak \mathbf{t},\mathbf{ti}$ and $\lambda_r,\lambda_{ri},\lambda_s,\allowbreak\lambda_{si},\lambda_t,\lambda_{ti}$ define a SimplE representation satisfying \eqref{eqArg2Rule}. Then one of the following two rules is satisfied as well:
\begin{align}
R(X,Y) \wedge S(Y,Z) \rightarrow \bot \label{eqPropSimlpECompositionRule1}\\
\top \rightarrow  T(X,Z) \label{eqPropSimlpECompositionRule2}
\end{align}
\end{proposition}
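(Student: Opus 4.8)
\medskip
\noindent\textbf{Proof plan.} The plan is to argue via the following dichotomy: either the body $R(X,Y)\wedge S(Y,Z)$ of \eqref{eqArg2Rule} is satisfiable in the given SimplE representation --- i.e.\ there are entity vectors $\mathbf{e_h^0},\mathbf{e_t^0},\mathbf{f_h^0},\mathbf{f_t^0},\mathbf{g_h^0},\mathbf{g_t^0}$ making all four conjuncts on the left of \eqref{eqTripleImplicationCompositionRule} true --- or it is not. If it is not, then no assignment of vectors makes $R(X,Y)\wedge S(Y,Z)$ true, which is exactly what is required for \eqref{eqPropSimlpECompositionRule1} to be satisfied. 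So it remains to show that if the body \emph{is} satisfiable, then \eqref{eqPropSimlpECompositionRule2} holds. I will prove the stronger statement that in that case $\mathbf{t}=\mathbf{0}$ and $\lambda_t\le 0$, and symmetrically $\mathbf{ti}=\mathbf{0}$ and $\lambda_{ti}\le 0$; then $\langle\mathbf{e_h},\mathbf{t},\mathbf{g_t}\rangle=0\ge\lambda_t$ and $\langle\mathbf{g_h},\mathbf{ti},\mathbf{e_t}\rangle=0\ge\lambda_{ti}$ for all entity vectors, so every assignment makes $T(X,Z)$ true, i.e.\ \eqref{eqPropSimlpECompositionRule2} is satisfied.

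The key observation is that on the left of \eqref{eqTripleImplicationCompositionRule} the vector $\mathbf{e_h}$ occurs in \emph{only} the conjunct $\langle\mathbf{e_h},\mathbf{r},\mathbf{f_t}\rangle\ge\lambda_r$ and $\mathbf{g_t}$ occurs in \emph{only} the conjunct $\langle\mathbf{f_h},\mathbf{s},\mathbf{g_t}\rangle\ge\lambda_s$. Fix a witness $(\mathbf{e_h^0},\dots,\mathbf{g_t^0})$. Keeping $\mathbf{e_t}=\mathbf{e_t^0}$ and $\mathbf{f_h}=\mathbf{f_h^0}$ fixed, and setting $(\mathbf{f_t},\mathbf{g_h})$ to either $(\mathbf{f_t^0},\mathbf{g_h^0})$ or $(-\mathbf{f_t^0},-\mathbf{g_h^0})$ --- both of which keep the second and fourth conjuncts true --- the first and third conjuncts reduce to two \emph{independent} linear inequalities, in $\mathbf{e_h}$ and in $\mathbf{g_t}$ respectively, each of whose solution set is nonempty and hence is either all of $\mathbb{R}^n$ or a full-dimensional half-space. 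Thus \eqref{eqTripleImplicationCompositionRule} forces $\langle\mathbf{e_h},\mathbf{t},\mathbf{g_t}\rangle\ge\lambda_t$ whenever $\mathbf{g_t}$ lies in $H_2=\{\mathbf{g_t}:\langle\mathbf{f_h^0},\mathbf{s},\mathbf{g_t}\rangle\ge\lambda_s\}$ and $\mathbf{e_h}$ lies in the half-space $H_1^{\pm}$ with inward normal $\pm(\mathbf{r}\circ\mathbf{f_t^0})$, where $\circ$ denotes the componentwise product. For fixed $\mathbf{g_t}$ the map $\mathbf{e_h}\mapsto\langle\mathbf{e_h},\mathbf{t},\mathbf{g_t}\rangle$ is linear with coefficient vector $\mathbf{t}\circ\mathbf{g_t}$; a linear functional can be bounded below on $H_1^{+}$ only if its coefficient vector is a nonnegative multiple of $\mathbf{r}\circ\mathbf{f_t^0}$, and bounded below on $H_1^{-}$ only if it is a nonnegative multiple of $-(\mathbf{r}\circ\mathbf{f_t^0})$, so the two conditions together force $\mathbf{t}\circ\mathbf{g_t}=\mathbf{0}$. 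As this holds for every $\mathbf{g_t}\in H_2$, and $H_2$ (being $\mathbb{R}^n$ or a full-dimensional half-space) contains a point with nonzero $k$-th coordinate for each $k$, we get $t_k=0$ for all $k$, i.e.\ $\mathbf{t}=\mathbf{0}$; and then $\langle\mathbf{e_h},\mathbf{t},\mathbf{g_t}\rangle=0\ge\lambda_t$ gives $\lambda_t\le 0$.

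Finally, the whole setup is invariant under the involution that swaps the head and tail mentions of every entity and simultaneously swaps $(\mathbf{r},\mathbf{ri})$, $(\mathbf{s},\mathbf{si})$ and $(\mathbf{t},\mathbf{ti})$ together with their thresholds: it maps the body of \eqref{eqTripleImplicationCompositionRule} to itself and exchanges the two conjuncts of its consequent. Applying the argument of the previous paragraph to the transformed instance therefore also yields $\mathbf{ti}=\mathbf{0}$ and $\lambda_{ti}\le 0$, which completes the proof. The step I expect to require the most care is the conclusion $\mathbf{t}\circ\mathbf{g_t}=\mathbf{0}$: freezing the auxiliary vectors at the witness values only confines $\mathbf{e_h}$ to a half-space, so a priori one merely learns that $\mathbf{t}\circ\mathbf{g_t}$ is a nonnegative multiple of that half-space's normal; it is precisely the freedom to negate $\mathbf{f_t^0}$, compensated by negating $\mathbf{g_h^0}$ so the fourth conjunct stays true, that flips the normal and removes the sign ambiguity. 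One should also verify the boundary situations separately --- e.g.\ when $\mathbf{r}\circ\mathbf{f_t^0}=\mathbf{0}$, so that $H_1^{\pm}=\mathbb{R}^n$ --- but in each of these the same computation goes through, so no genuinely new case arises.
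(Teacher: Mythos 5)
Your proof is correct, and it establishes the same dichotomy as the paper's---either the body of \eqref{eqArg2Rule} is unsatisfiable, so that \eqref{eqPropSimlpECompositionRule1} holds, or $\mathbf{t}=\mathbf{ti}=\mathbf{0}$ with $\lambda_t,\lambda_{ti}\le 0$, so that \eqref{eqPropSimlpECompositionRule2} holds---but by a genuinely different technical route. The paper argues contrapositively: assuming $\mathbf{t}\neq\mathbf{0}$, it writes down a single explicit family of witnesses built coordinatewise from a modified sign function ($\mathbf{e_h}=(1,\dots,1)$, $\mathbf{g_t}=(-K\,\textit{sg}(t_1),\dots,-K\,\textit{sg}(t_n))$, and so on), and lets $K\to\infty$ so that each body conjunct of \eqref{eqTripleImplicationCompositionRule} evaluates to $K$ or $K^2$ times a sum of absolute values while $\langle\mathbf{e_h},\mathbf{t},\mathbf{g_t}\rangle\to-\infty$; the degenerate subcases where some relation vector is $\mathbf{0}$ are absorbed into \eqref{eqPropSimlpECompositionRule1} or into trivially satisfied conjuncts. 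You instead exploit the structural observation that $\mathbf{e_h}$ and $\mathbf{g_t}$ each occur in exactly one body conjunct, reduce to the fact that a linear functional bounded below on a half-space must be a nonnegative multiple of its inward normal, remove the sign ambiguity via the $(\mathbf{f_t},\mathbf{g_h})\mapsto(-\mathbf{f_t},-\mathbf{g_h})$ flip (which preserves the fourth conjunct), and dispatch $\mathbf{ti}$ by the head/tail involution where the paper simply declares the case ``entirely analogous''. I checked the details: the independence of the two residual inequalities, the non-emptiness of $H_1^{\pm}$ and $H_2$, the full-dimensionality argument giving $t_k=0$ for every $k$, and the fact that the involution maps the satisfaction condition of \eqref{eqArg2Rule} to itself all hold. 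Your version handles the boundary cases more uniformly (they collapse into ``the half-space is all of $\mathbb{R}^n$'') and makes the geometric obstruction more transparent; the paper's version is more elementary in that it needs no convex-duality fact, only a direct computation. Both are valid.
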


 \section{Relations as Arbitrary Convex Regions}\label{secMainResult}
In this section we  consider arbitrary convex geometric models, and show that they can correctly represent a large class of existential rules.
We particularly show that  KBs $\mathcal K$ based on \emph{quasi-chained rules} are properly captured by convex geometric models, in the sense that for each finite  model $\mathcal{I}$ of $\mathcal K$, there exists a convex geometric model $\eta$ such that $\mathcal{I} = \phi(\eta)$. 

 \medskip \noindent \textbf{Quasi-chained Rules.} 
 We say that  an existential rule $\sigma$, defined as in~\eqref{eqerule} above,  is  \emph{quasi-chained (QC)} if for all $1 \leq i \leq n$
$$|(\mathsf{vars}(B_1) \cup ... \cup \mathsf{vars}(B_{i-1})) \cap\mathsf{vars}(B_{i})| \leq 1$$
An ontology  is quasi-chained  if all its rules are either quasi-chained or  quasi-chained negative constraints. 

\smallskip Note that quasi-chainedness is a natural and useful restriction.  Quasi-chained rules are indeed closely related to the well-known chain-datalog fragment of datalog~\cite{Shmueli87,UllmanG88} in which important properties, e.g. reachability, are still expressible.  Furthermore, prominent Horn description logics can be expressed using decidable fragments of   quasi-chained existential  rules. For example,  $\mathcal{ELHI}$ ontologies\footnote{We assume they are in a suitable normal form~\cite{DLBook}} can be embedded into the  \emph{guarded fragment}~\cite{CaliGK13} 
of QC existential rules. Further,  QC existential rules subsume \emph{linear existential rules}, which only allow rule bodies that consist of a single atom and capture a $k$-ary extension of \textsl{DL-Lite}$_\mathcal R$.

\medskip We next show the announced result that geometric models properly capture quasi-chained ontologies. 


\begin{proposition}\label{propmain}
Let $\mathcal K=(\Sigma, D)$, with $\Sigma$ a quasi-chained ontology, and let $\mathcal{M}$ be a finite model of $\mathcal K$. Then $\mathcal{K}$ has a convex geometric model $\eta$ such that $\phi(\eta)=\mathcal{M}$.
\end{proposition}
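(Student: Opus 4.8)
Definition~\ref{defGeometricModel} has two requirements, and essentially all the work — and all the use of quasi-chainedness — is in the second. Write $O=\mathfrak O(\mathcal M)$, which is finite; I would take $m=|O|$ and choose the vectors $\eta(o)$, $o\in O$, to be \emph{affinely independent} (say, distinct standard basis vectors). For each $k$-ary relation name $R$, set $\eta(R)$ to be the convex hull of the finite point set $\{\,\eta(o_1)\oplus\dots\oplus\eta(o_k)\mid R(o_1,\dots,o_k)\in\mathcal M\,\}$, a bounded convex polytope, so that $\eta$ is a convex geometric interpretation. To establish Condition~1 it then suffices to show $\phi(\eta)=\mathcal M$ (we are given $\mathcal M\models\mathcal K$). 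The inclusion $\mathcal M\subseteq\phi(\eta)$ is immediate; for the converse the point is that, over \emph{all} $k$-tuples from $O$, the points $\eta(o_1)\oplus\dots\oplus\eta(o_k)$ are in convex position: for a fixed tuple one can choose, in the $j$-th block of coordinates, a linear functional on $\mathbb R^m$ uniquely maximised at $\eta(o_j)$ among $\{\eta(o):o\in O\}$ (possible since affinely independent points are vertices of their simplex), and the sum of these block functionals is a linear functional on $\mathbb R^{km}$ uniquely maximised at that tuple point. Hence the convex hull of any subset of tuple points contains no other tuple point, so $\eta(o_1)\oplus\dots\oplus\eta(o_k)\in\eta(R)$ forces $R(o_1,\dots,o_k)\in\mathcal M$.

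For Condition~2, fix $\mathbf v_1,\dots,\mathbf v_n\in\mathbb R^m$ and put $\eta^*(c_i)=\mathbf v_i$ for fresh constants $c_i$; since $\eta^*$ must extend $\eta$, the regions are frozen. The first observation is that projecting $\eta(R)$ onto any coordinate block lands inside the simplex $\Delta=\mathrm{conv}\{\eta(o):o\in O\}$, so a new constant $c_i$ can occur in an atom satisfied by $\eta^*$ only if $\mathbf v_i\in\Delta$, in which case $\mathbf v_i$ has \emph{unique} barycentric coordinates $\mathbf v_i=\sum_{o\in O}\gamma^i_o\,\eta(o)$. The engine of the argument is then a \emph{decomposition principle}: a ground atom $R(d_1,\dots,d_k)$ is satisfied by $\eta^*$ iff there is a probability distribution on $\{R(o_1,\dots,o_k)\in\mathcal M\}$ whose $j$-th marginal equals the barycentric distribution of $\eta^*(d_j)$ (the point mass at $d_j$ when $d_j\in O$) — essentially ``coalescing equal tuples'' in a convex combination. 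I would then build $\mathcal M'=\phi(\eta^*)$ by a restricted-chase-style procedure: whenever the new constants trigger an existential rule, the decomposition principle exhibits $\mathcal M$-atoms witnessing the body, and the same convex combination of \emph{their} witnesses (which lies in $\Delta$, hence in the appropriate region) serves as the witness point, realised by a fresh null — reusing an already-placed point whenever the demand is already met. Because every constant or null introduced this way mirrors a convex combination over one of finitely many ``configurations'' of $\mathcal M$-objects, the process stabilises; and a negative constraint can never be violated, since by the decomposition principle a violation of its body would pull back to a violation already inside $\mathcal M$, contradicting $\mathcal M\models\Sigma$. As $\mathcal M'\supseteq\phi(\eta)=\mathcal M\supseteq D$, the resulting $\mathcal M'$ is a model of $(\Sigma,D\cup\phi(\eta^*))$, giving Condition~2.

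The crux — and the only point where quasi-chainedness is indispensable — is the multi-atom version of the decomposition principle: if the body $B_1\wedge\dots\wedge B_p$ of a rule or constraint is geometrically satisfied by an assignment $\nu$ of arbitrary vectors to its variables, then $\nu$ is a convex combination of assignments mapping every variable into $\{\eta(o):o\in O\}$ and satisfying the body in $\mathcal M$. I would prove this by induction on $p$, processing the atoms in a quasi-chained order: the partial body $B_1\wedge\dots\wedge B_{i-1}$ already comes with such a ``coupling'' distribution, $B_i$ comes with its own by the single-atom principle, and by the QC condition $B_i$ shares \emph{at most one} variable with its predecessors, so the two distributions agree on that variable (both having its unique barycentric marginal under $\nu$) and can be glued over it, conditioning on the shared object and taking a product on the rest. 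This gluing is exactly the step that breaks down when two or more variables are shared — the marginal-matching conditions then become over-determined and force spurious containments, precisely the phenomenon behind the collapse $C_W\subseteq C_H$ in the footnote to Example~\ref{exampleIntro}. I expect the genuinely delicate bookkeeping to be: making the ``glue along one coordinate'' step precise when the shared variable sits in several argument positions of $B_i$ (and, if rule bodies may repeat a variable within a single atom, imposing a normal form that excludes this, since otherwise the diagonal of a region need not reflect $\mathcal M$); verifying termination of the witness procedure with point reuse; and keeping distinct nulls mapped to distinct points within the limited freedom available.
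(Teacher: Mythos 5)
Your proposal is correct and follows essentially the same route as the paper's proof: the same basis-vector/convex-hull construction, the same single-atom and two-atom ``decomposition'' lemmas exploiting uniqueness of barycentric coordinates to glue the two convex combinations along the single shared variable (the paper's Lemma~\ref{lemmaRelationComposition}), the same reduction of longer quasi-chained bodies to the two-atom case, and the same treatment of existential heads by placing fresh nulls at the corresponding convex combinations of $\mathcal M$-witnesses. The only cosmetic differences are your supporting-functional argument for $\phi(\eta)=\mathcal M$ (the paper reads off coordinates directly) and your termination claim for the chase, which the paper sidesteps by simply accepting a countably infinite model $\mathcal M'$.
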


\noindent To clarify the intuitions behind this proposition, we show how an $m$-dimensional geometric model $\eta$ satisfying $\phi(\eta)=\mathcal{M}$ can be constructed, where $m= |\mathfrak O(\mathcal M)|$. Let $x_1, \ldots, x_m$ be an enumeration of the elements in $\mathfrak O(\mathcal M)$, 
 then  for each $x_i$,  $\eta(x_i)$ is defined as the vector in $\mathbb{R}^m$ with  value $1$ in the $i^{th}$ coordinate  and $0$ in all others. Further, for each $R \in \mathfrak R_k(\mathcal K)$, we define $\eta(R) $ as follows, where $\text{CH}$ denotes the convex-hull:
\begin{align}
\eta(R) = \text{CH}\{ \eta(y_1) \oplus ... \oplus \eta(y_k) \,|\, R(y_1,...,y_k) \in \mathcal M\}
\end{align}
A proof that $\phi(\eta)= \mathcal M$, and that $\eta$ satisfies Conditions 1 and 2 from Definition~\ref{defGeometricModel}, is provided in the  appendix.

\smallskip For the next corollary we assume that the quasi-chained ontology $\Sigma$ belongs to  fragments enjoying
the \emph{finite model property (FMP)}, i.e.\    if a KB $\mathcal K$ is satisfiable, it has a finite model, e.g.\ where $\Sigma$ is  weakly-acyclic~\cite{FaginKMP05}, guarded, linear, or a quasi-chained datalog program. 
The following then is a direct consequence of Proposition  \ref{propmain}.

  
\begin{corollary}
Let $\mathcal K=(\Sigma, D)$ with $\Sigma$ as above. 
It holds that $\mathcal{K}$ is satisfiable iff $\mathcal{K}$ has a convex geometric model.
\end{corollary}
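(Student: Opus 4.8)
The plan is to derive both implications directly: the right-to-left direction is essentially unpacking Definition~\ref{defGeometricModel}, while the left-to-right direction combines the assumed finite model property with Proposition~\ref{propmain}. No genuinely new argument is needed; the corollary is meant to record a clean equivalence, with all the work hidden in Proposition~\ref{propmain}.

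For the ($\Leftarrow$) direction, suppose $\mathcal{K}$ has a convex geometric model $\eta$. By Condition~1 of Definition~\ref{defGeometricModel}, we have $\phi(\eta) = \mathcal{M}$ for some model $\mathcal{M}$ of $\mathcal{K}$. Hence $\mathcal{K}$ admits a model and is therefore satisfiable. This direction in fact does not use quasi-chainedness, the finite model property, or convexity at all: it holds for arbitrary geometric models.

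For the ($\Rightarrow$) direction, suppose $\mathcal{K}$ is satisfiable. Since $\Sigma$ belongs to one of the fragments enjoying the finite model property (weakly acyclic, guarded, linear, or quasi-chained datalog), satisfiability of $\mathcal{K}$ guarantees the existence of a \emph{finite} model $\mathcal{M}$ of $\mathcal{K}$. Now apply Proposition~\ref{propmain} to this finite model: since $\Sigma$ is quasi-chained, there is a convex geometric model $\eta$ of $\mathcal{K}$ with $\phi(\eta) = \mathcal{M}$. In particular $\mathcal{K}$ has a convex geometric model, completing the equivalence.

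The only point requiring any care — and the closest thing to an obstacle — is the hypothesis bookkeeping: Proposition~\ref{propmain} takes a finite model as input and produces a geometric model of dimension $m = |\mathfrak{O}(\mathcal{M})|$, so one must invoke the finite model property precisely to ensure such a finite $\mathcal{M}$ exists (an infinite model would not directly feed into the construction). Once that is in place, both directions are immediate, and one should simply remark that the stated list of fragments is known from the cited literature to have the finite model property, so the hypothesis ``$\Sigma$ as above'' is exactly what makes the argument go through.
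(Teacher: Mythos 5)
Your proposal is correct and matches the paper's intended argument exactly: the paper presents the corollary as a direct consequence of Proposition~\ref{propmain}, obtained by using the finite model property to extract a finite model and feeding it into the construction, with the converse direction following immediately from Condition~1 of Definition~\ref{defGeometricModel}. Nothing further is needed.
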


Intuitively, we require logics enjoying the FMP since the construction in the proof of Proposition 3 
  uses one dimension for each object that appears in a given model of the knowledge base. For ontologies expressed in fragments without the FMP, we can thus not guarantee the existence of an Euclidean model using this argument.

 \smallskip A natural question   is whether  there is a way of defining  a convex $n$-dimensional geometric model for an $n$ considerably smaller than $m =|\mathfrak{O}(\mathcal{M})|$ for some model $\mathcal{M}$. For the case of datalog rules, where $|\mathfrak{O}(\mathcal{M})| =|\mathbf{C}(\mathcal{K})|$, it turns out that this is in general not possible.

\begin{proposition}\label{propLowerBound}
For each $n\in \mathbb{N}$, there exists a KB $\mathcal K= (\Sigma, D)$ with  $\Sigma$   a datalog program,  over a signature with $n$ constants and $n$ unary predicates such that $\mathcal K$ does not have a convex geometric model in $\mathbb{R}^m$ for $m<n-1$.
\end{proposition}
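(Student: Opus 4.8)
The plan is to exhibit an explicit knowledge base $\mathcal{K}=(\Sigma,D)$ whose unique model forces a configuration of points and convex regions that cannot be realized in low dimension. The natural candidate is an ``anti-subsumption'' construction: take $n$ constants $a_1,\dots,a_n$ and $n$ unary predicates $P_1,\dots,P_n$, put in $D$ the facts $P_j(a_i)$ for all $i\neq j$, and have no rules at all (or only trivial ones), so that the model $\mathcal{M}$ is exactly $D$ and is forced. Then in any convex geometric model $\eta$ in $\mathbb{R}^m$, each $P_j$ is a convex region $\eta(P_j)\subseteq\mathbb{R}^m$ containing all the points $\eta(a_i)$ with $i\neq j$ but \emph{not} the point $\eta(a_j)$. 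This says: for each $j$, the point $\eta(a_j)$ lies outside the convex hull of $\{\eta(a_i): i\neq j\}$. Equivalently, no $\eta(a_j)$ is in the convex hull of the others, i.e.\ the $n$ points $\eta(a_1),\dots,\eta(a_n)$ are in \emph{convex position} (they are the vertices of their convex hull).

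First I would make precise why the points must be distinct and in convex position: distinctness follows because $P_j$ separates $a_j$ from $a_i$ for $j\neq i$ (indeed $\eta(a_i)\in\eta(P_j)$ but $\eta(a_j)\notin\eta(P_j)$, and a convex region containing a point cannot be ``separated'' in this way from an equal point). Convex position follows because if $\eta(a_j)$ were a convex combination of the other $\eta(a_i)$'s, then since the convex region $\eta(P_j)$ contains all those $\eta(a_i)$'s it would contain $\eta(a_j)$ too, contradicting $P_j(a_j)\notin\mathcal{M}$. Conversely, if the points are in strict convex position, one can take $\eta(P_j)$ to be the convex hull of $\{\eta(a_i):i\neq j\}$ and this is a valid convex geometric model — so the question reduces exactly to: what is the minimum dimension in which $n$ points can be placed in convex position? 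I would also need to check Condition 2 of Definition~\ref{defGeometricModel} holds for the ``yes'' direction, but since $\Sigma$ is a datalog program with no rules, Condition 2 reduces to Condition 1, so nothing further is required.

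The key combinatorial fact is that $n$ points in $\mathbb{R}^m$ with $m \le n-2$ cannot all be in convex position — equivalently, any $n \ge m+2$ points in $\mathbb{R}^m$ have the property that at least one of them lies in the convex hull of the others. This is essentially Radon's theorem: any $m+2$ points in $\mathbb{R}^m$ can be partitioned into two sets whose convex hulls intersect, which quickly yields that among any $m+2$ points one lies in the convex hull of the other $m+1$ (more carefully: Radon gives a point $p$ in the intersection of $\mathrm{CH}(A)$ and $\mathrm{CH}(B)$ for a partition $A\sqcup B$; a short argument — e.g.\ via Carathéodory, or by a direct affine-dependence computation — then shows some single vertex is redundant). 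So if $m \le n-2$, pick any $m+2$ of the $n$ constants; one of them, say $\eta(a_j)$, lies in the convex hull of $m+1$ others, hence in the convex hull of all the others, contradicting the separation enforced by $P_j$. Therefore $m \ge n-1$, which is the claimed bound. (And $m=n-1$ is achievable by taking the vertices of a simplex, matching the construction in Proposition~\ref{propmain}.)

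The main obstacle I anticipate is the clean statement and citation of the ``one vertex is redundant'' consequence of Radon's theorem: Radon's theorem itself only gives a partition with intersecting hulls, and one must argue the slightly stronger statement that a \emph{single} point can be dropped. This is standard (it is one direction of the equivalence between Radon's theorem and the statement that $\mathbb{R}^m$ has ``Radon number'' $m+2$, and follows from an affine-dependence argument: $m+2$ points in $\mathbb{R}^m$ are affinely dependent, $\sum \mu_i \eta(a_i) = 0$ with $\sum \mu_i = 0$ and not all $\mu_i=0$; normalizing the positive coefficients expresses one point with $\mu_i>0$ as a convex combination of those with $\mu_i<0$, hence of the others). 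I would spell out this one-line linear-algebra derivation rather than invoke Radon as a black box, since it directly gives the redundant vertex and keeps the argument self-contained. The only other point to be careful about is ensuring the reduction is tight — i.e.\ that we are not merely giving a lower bound on convex position but that the database genuinely has no model of lower dimension — which is handled by the ``for each $j$'' separation argument above applying to \emph{any} convex geometric model, not just the canonical one.
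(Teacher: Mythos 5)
Your approach has two genuine gaps, and the second one is fatal. First, with ``no rules at all'' the model is \emph{not} forced to be exactly $D$: a model of $(\emptyset,D)$ is any interpretation containing $D$, and Condition 1 of Definition~\ref{defGeometricModel} only asks that $\phi(\eta)$ equal \emph{some} model. So the trivial embedding that places the $n$ constants at distinct points of $\mathbb{R}$ and sets $\eta(P_j)=\mathbb{R}$ for every $j$ is a perfectly good $1$-dimensional convex geometric model of your KB, and your lower bound fails for it. To force $P_j(a_j)\notin\phi(\eta)$ you need the negative constraint $P_1(X)\wedge\dots\wedge P_n(X)\rightarrow\bot$, which is exactly what the paper's construction adds.

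Second, and more seriously, even once the separation $\eta(a_j)\notin\mathrm{CH}\{\eta(a_i):i\neq j\}$ is forced, your ``key combinatorial fact'' is false: $n$ points in $\mathbb{R}^m$ with $m\leq n-2$ \emph{can} all be in convex position --- the vertices of a convex $n$-gon in the plane are such a configuration for every $n$. Radon's theorem does not give you a single redundant vertex; your affine-dependence computation $\sum\mu_i\eta(a_i)=0$, $\sum\mu_i=0$ produces a point lying in $\mathrm{CH}\{\eta(a_i):\mu_i>0\}\cap\mathrm{CH}\{\eta(a_i):\mu_i<0\}$, and that point is one of the original $\eta(a_j)$'s only when one side of the partition is a singleton, which is not guaranteed. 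So convex position alone yields no bound beyond $m\geq 2$. The missing idea is that the obstruction lives in the \emph{regions}, not in the constant points, and is detected via Condition 2 of Definition~\ref{defGeometricModel}: since $\eta(a_j)\in\bigcap_{i\neq j}\eta(A_i)$, every $n-1$ of the $n$ convex regions intersect, so by Helly's theorem in $\mathbb{R}^m$ with $m\leq n-2$ all $n$ regions share a point $p$; Condition 2 then lets one place a fresh constant at $p$, which together with the negative constraint makes $(\Sigma, D\cup\phi(\eta^*))$ unsatisfiable. That is the paper's argument, and some appeal to the extension-by-fresh-constants condition (or an equivalent statement that the regions must have empty common intersection over all of $\mathbb{R}^m$) appears unavoidable.
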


To see this, consider the knowledge base $\mathcal K =(\Sigma,D)$ with $D = \{A_i(a_j) \mid  1 \leq i \neq j \leq n \}$,  for some $n \in \mathbb N$, and $\Sigma$ consisting of the following rule
\begin{align}\label{eqRuleDimProperty1A}
A_1(X) \wedge ... \wedge A_n(X) \rightarrow \bot
\end{align}
%

It is clear that $\mathcal K$ is satisfiable. Now, let $\eta$ be an $n-2$ dimensional convex geometric model of $\mathcal K$. Clearly, for each $a_j \in \mathbf{C}(\mathcal K)$, it holds that $\eta(a_j)\in \bigcap_{i\neq j} \eta(A_{i})$ and thus $\bigcap_{i\neq j} \eta(A_{i}) \neq \emptyset$. Using Helly's Theorem\footnote{This theorem states that if $C_1,...,C_k$ are convex regions in $\mathbb{R}^n$, with $k>n$, and each $n+1$ among these regions have a non-empty intersection, it holds that $\bigcap_{i=1}^k C_i \neq \emptyset$.}, it follows that $\bigcap_{i=1}^n  \eta(A_{i})$ contains some point $p$. Further, let $\eta^*$ be the extension of $\eta$ to $\mathbf{C}(\mathcal{K})\cup \{d\}$ defined by $\eta^*(d)=p$. Then $\mathcal K \cup \phi(\eta^*)$ contains $ D \cup \{A_i(d)\mid i \in [1,n]\}$ which together with \eqref{eqRuleDimProperty1A} implies  that $\mathcal K \cup \phi(\eta^*)$ does not have a convex model. Thus,  $\eta$ cannot be an $n-2$ dimensional convex geometric model $\mathcal K$, and  the dimensionality of any convex model of $\mathcal K$ has to be at least $n-1$.

Note that the model $\eta$ that we constructed above is $m$-dimensional, but the lower bound from Proposition \ref{propLowerBound} only states that at least $m-1$ dimensions are needed in general. In fact, it is easy to see that such an $m-1$-dimensional convex geometric model indeed exists for datalog programs. In particular, let $H$ be the hyperplane defined by $H = \{(p_1,...,p_m) \,|\, p_1 + ... + p_m = 1\}$ then clearly $\eta(x_i) \in H$ for every constant $x_i$ and $\eta(R) \subseteq H \oplus ... \oplus H$. In other words, each $\eta(x_i)$ is located in an $m-1$ dimensional space, and $\eta(R)$ is a subset of an $k \cdot (m-1)$ dimensional space.


\medskip \noindent {\bf Beyond Quasi-chained Rules.} The main remaining question  is whether the restriction to QC rules is necessary. The next example illustrates that if a KB contains rules that do not satisfy this restriction, it may not be possible to construct a convex geometric model. 
\begin{example}\label{counterExampleNonChainRules}
Consider $\Sigma$ consisting of the following rule:
\begin{align*}
R_1(X,Y) \wedge R_2(X,Y) \rightarrow \bot
\end{align*}
and let $D=\{R_1(a_1,a_1),\allowbreak R_1(a_2,a_2),\allowbreak R_2(a_1,a_2),\allowbreak R_2(a_2,a_1)\}$. Then clearly $\mathcal M =D$ is a model of the knowledge base $(\Sigma,D)$. Now suppose this KB had a convex geometric model $\eta$. Let $\eta^*$ be an extension of $\eta$ to the fresh constant $b$, defined by $\eta^*(b) = 0.5 \eta(a_1) + 0.5 \eta(a_2)$. Note that we then have:
\begin{align*}
\eta^*(b) \oplus \eta^*(b) &= 0.5(\eta(a_1) \oplus \eta(a_1) + 0.5(\eta(a_2) \oplus \eta(a_2)\\
&= 0.5(\eta(a_1) \oplus \eta(a_2) + 0.5(\eta(a_2) \oplus \eta(a_1)
\end{align*}
and thus, by the convexity of $\eta(R_1)$ and $\eta(R_2)$, it follows that $\eta^* \models R_1(b,b) \wedge R_2(b,b)$. This means that $(\Sigma, D\cup \phi(\eta^*))$ does not have a model, which contradicts the assumption that $\eta$ was a geometric model.
\end{example}

\section{Extended Geometric Models}\label{secExtended}

As shown in Section \ref{secMainResult}, there are knowledge bases which have a finite model but which do not have a convex geometric model. To deal with arbitrary knowledge bases, one possible approach is to simply drop the convexity requirement. In this section, we briefly explore another solution, based on the idea that for each relation symbol $R \in \mathfrak{R}_k(\mathcal{K})$, we can consider a function $f_R$ which embeds $k$-tuples into another vector space. This can be formalized as follows

\begin{definition}[Extended convex geometric interpretation]\label{defGeneralizedGeometricInterpretation}
Let $\mathfrak R$ be a set of relation names and ${\bf X}\subseteq {\bf C} \cup {\bf N}$ be a set of objects. An \emph{$m$-dimensional extended convex geometric interpretation of $(\mathfrak{R},{\bf X})$} is a pair $((f_R)_{R\in \mathfrak{R}},\eta)$, where for each $R\in \mathfrak{R}_k$, $f_R$ is a $\mathbb{R}^{k\cdot m} \rightarrow \mathbb{R}^{l_R}$ mapping, for some $l_R \in \mathbb{N}$, and $\eta$ assigns to each $R\in  \mathfrak{R}_k$ a convex region $\eta(R)$ in $\mathbb{R}^{l_R}$ and to each constant $c$ from ${\bf X}$ a vector $\eta(c) \in \mathbb{R}^m$.
\end{definition}

\noindent We can now adapt the definition of satisfaction of a ground atom as follows.

\begin{definition}[Satisfaction of ground atoms]
Let $((f_R)_{R\in \mathfrak{R}},\eta)$ be an extended convex geometric interpretation of $(\mathfrak{R},{\bf X})$, $R \in \mathfrak{R}_k$ and  $o_1,...,o_k\in {\bf X}$. We say that $\eta$ \emph{satisfies} a ground atom $R(o_1,...,o_k)$, written $\eta \models R(o_1,...,o_k)$, if $f_R(\eta(o_1) \oplus ... \oplus \eta(o_k)) \in \eta(R)$. 
\end{definition}

\noindent The notion of extended convex geometric model is then defined as in Definition \ref{defGeometricModel}, by simply using extended convex geometric models instead of (standard) geometric models. 

Note that we almost trivially have that every knowledge base  $\mathcal{K} = (\Sigma, D)$ which has a finite model $\mathcal M$ also has an extended convex geometric model. Indeed, to construct such a model, we can choose $\eta$ for constants  from $\bf X$ arbitrarily, as long as $\eta(o_1)\neq \eta(o_2)$ if $o_1\neq o_2$. We can then define $f_R$ as follows: $f_R(\mathbf{x})=1$ if $\mathcal M$ contains a ground atom $R(o_1,...,o_k)$ such that $\mathbf{x} = \eta(o_1)\oplus...\oplus \eta(o_k)$, and $f_R(\mathbf{x})=0$ otherwise. Finally we can define $\eta(R) = \{1\}$. It can be readily checked  that the extended convex geometric interpretation which is constructed in this way is indeed an extended convex geometric model of $\mathcal{K}$. 

The extended convex geometric model which is constructed in this way is uninteresting, however, as it does not allow us to use the geometric representations of the constants to induce any knowledge which is not already given in $\mathcal{K}$. Specifically, suppose $\mathbf{v_1},...,\mathbf{v_n} \in \mathbb{R}^m$ and let $\eta^*$ be the extension of $\eta$ to $\mathbf X\cup \{o_1,...,o_n\}$, then for $o'_1,...,o'_k \in \mathbf  X\cup \{o_1,...,o_n\}$ and $R\in\mathfrak{R}_k$, we have $\eta^* \models R(o'_1,..,o'_k)$ iff $\mathcal M$ contains some atom $R(p_1,...,p_k)$ such that $\eta(p_1)=\eta^*(o'_1),...,\eta(p_k)=\eta^*(o'_k)$. 
This means that in practice, we need to impose some restrictions on the functions $f_R$. Note, however, that we cannot restrict $f_R$ to be linear, as that would lead to the same restrictions as we encountered for standard convex geometric models. For instance, it is easy to verify that the knowledge base from Example \ref{counterExampleNonChainRules} cannot have an extended geometric model in which $f_{R_1}$ and $f_{R_2}$ are linear.

One possible alternative would be to encode each function $f_R$ as a neural network, but there are still several important open questions related to this choice. First, it is far from clear how we would then be able to check whether an extended convex geometric interpretation is a model of a given ontology. In contrast, for standard convex geometric interpretations, we can use standard linear programming techniques to check whether a given existential rule is satisfied. It is furthermore unclear which types of neural networks would be needed to guarantee that all types of existential rules can be captured. 

\section{Related Work}\label{secrelated}

Various  approaches to KG completion have been proposed that are based on neural network architectures \cite{NTN,Niepert16,MinerviniDRR17}. Interestingly, some of these approaches can be seen as special cases of the extended convex geometric models  considered in Section \ref{secExtended}. For example, in the E-MLP model \cite{NTN}, to predict whether $(e,R,f)$ is a valid triple, the concatenation of the vectors $\mathbf e$ and $\mathbf f$ is fed into a two-layer neural network.

Instead of constructing tuple representations from entity embeddings, some authors have also considered approaches that directly learn a vector space embedding of entity tuples \cite{Turney:2005:MSS:1642293.1642475,DBLP:conf/naacl/RiedelYMM13}. For each relation $R$ a vector $\mathbf{r}$ can then be learned such that the dot product $\mathbf{r} \cdot \mathbf{t}$ 
reflects the likelihood that a tuple represented by $\mathbf{t}$ is an instance of $R$. This model does not put any a priori restrictions on the kind of relations that can be modeled, although it is clearly not suitable for modelling rules (e.g.\ it is easy to see that this model carries over the limitations of bilinear models). Moreover, as enough information needs to be available about each tuple, this strategy has primarily been used for modelling knowledge extracted from text, where representations of word-tuples are learned from sentences that contain these words.

Note that KG embedding methods model relations in a soft way: their associated scoring function can be used to rank ground facts according to their likelihood of being correct, but no attempt is made at modelling the exact extension of relations. This means that logical dependencies among relations cannot be modeled, which makes such representations fundamentally different from the geometric representations that we have considered in this paper. Nonetheless, some authors have used logical rules to improve the predictions that are made in a KG completion setting. For example, in \cite{wang2015knowledge}, a mixed integer programming formulation is used to combine the predictions made from a given KG embedding with a set of hard rules. Specifically, the aim of this approach is to determine the most plausible set of facts which is logically consistent with the given rules. Another strategy, used in \cite{DBLP:conf/emnlp/DemeesterRR16}, is to incorporate background knowledge in the loss function of the learning problem. Specifically, the authors propose to take advantage of relation inclusions, i.e.\ rules of the form $R(X,Y) \rightarrow S(X,Y)$, for learning better tuple embeddings. The main underlying idea is to translate such a rule to the soft constraint that $\mathbf{r} \cdot \mathbf{t} \leq \mathbf{s} \cdot \mathbf{t}$
should hold for each tuple $t$. This is imposed in an efficient way by restricting tuple embeddings to vectors with non-negative coordinates and then requiring that $r_i \leq s_i$ for each coordinate $r_i$ of $\mathbf{r}$ and corresponding coordinate $s_i$ of $\mathbf{s}$. However, this strategy cannot straightforwardly be generalized to other types of rules. 

To overcome this shortcoming, neural network architectures dealing with arbitrary Datalog-like rules have been recently proposed~\cite{Niepert16,MinerviniDRR17}. Other related approaches include \cite{WangC16,DBLP:conf/nips/Rocktaschel017,DBLP:conf/ilp/SourekSZSK17}. However, such methods essentially use neural network methods to simulate deductive inference, but do not explicitly model the extension of relations, and do not allow for the tight integration of induction and deduction that our framework supports. Moreover, these methods are aimed at learning (soft versions of) first-order rules from data, rather then constraining embeddings based on a given set of (hard) rules.

Within KR research,  \cite{HoheneckerL17} recently made first steps towards the integration of ontological reasoning and deep learning, obtaining  encouraging results. Indeed, the developed system was considerably faster than the state of the art RDFox~\cite{NenovPMHWB15}, while retaining high-accuracy. Initial results have also been obtained in the use of ontological reasoning to derive human-interpretable explanations from  the output of a neural network~\cite{SarkerXDRH17}.




\section{Conclusions and Future Work}

We have argued that knowledge base embedding models should be capable of representing sufficiently expressive classes of rules, a property which, to the best of our knowledge, has not yet been considered in the literature. We found that the commonly used translation-based  and bilinear models are prohibitively restrictive in this respect. In light of this, we argue that more work is needed to better understand how different kinds of rules can be geometrically represented. In this paper, we have initiated this analysis, by studying
 knowledge base embeddings in which relations are represented as convex regions in a space of tuples. These tuples are simply represented as concatenations of the vector representations of the individual arguments, and can thus be obtained using standard approaches for learning entity embeddings. 

Our main finding is that using this convex-regions approach, knowledge bases that are restricted to the important class of quasi-chained existential rules can be faithfully encoded,
  in the sense that any set of facts which is induced using that vector space embedding is logically consistent and deductively closed with respect to the input ontology. Note that this is an essential requirement if we want to exploit symbolic knowledge when learning embeddings. For example, one common strategy is to encode (soft versions of) the given rules in the loss function, but for such a strategy to be successful, we should ensure that the considered representation is actually capable of satisfying the corresponding (soft) constraints.
We thus believe  this paper provides an important step towards a comprehensive  integration of neural embeddings and KR technologies,  laying important foundations to  develop methods that combine deductive and inductive reasoning in a tighter way than current approaches.

As future work, the most important next step is to develop practical region-based embedding models. 
Allowing arbitrary polytopes would likely lead to overfitting, but we believe that by appropriately restricting the types of regions that are allowed and regularizing the embedding model in an appropriate way, it will be possible to make more accurate predictions than existing knowledge graph embedding models. For example, note that translation based models, as well as bilinear models when restricted to positive coordinates, are special cases of region based models, so a natural approach would be to learn region based models that are regularized to stay close to these standard approaches. From a theoretical point of view, an important open problem is to characterize particular classes of extended convex geometric models that are sufficiently expressive to model arbitrary existential rules (or interesting sub-classes). Indeed, the non-linear representation from Section~\ref{secExtended} is too general to be practically useful, and  we therefore need to characterize what types of knowledge bases can be captured by different kinds of simple neural network architectures. 
Finally, it would be interesting to extend our framework to model  recently introduced ontology languages
especially tailored for KGs \cite{KrotzschMOT17}, which include means for representing  annotations on  data and relations.


\section*{Acknowledgments} The authors were supported by  EU's Horizon 2020 programme under the Marie Sk{\l}odowska-Curie grant 663830 and ERC Starting Grant 637277 `FLEXILOG', respectively.

\bibliography{commonsense,wordembedding,entity}
\bibliographystyle{aaai}

\clearpage

\ifappendix
\appendix
\section{APPENDIX}

\medskip
\subsection{Proof of Proposition \ref{propLimitationBilinear}}

Let us write $r_{ij}$ for the element on the $i^{\textit{th}}$ row and $j^{\textit{th}}$ column of $M_r$, and similar for $M_s$.

\begin{lemma}\label{lemmaBilinear1}
Suppose $M_r$ and $M_s$ are matrices for which \eqref{eqArg2RuleBilinear} is satisfied. Let $k,l \in \{1,...,n\}$ be such that $s_{kl} \neq 0$. For each $m\in \{1,...,n\}$ such that $s_{km}=0$ it holds that $r_{km}=0$.
\end{lemma}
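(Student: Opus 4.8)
The plan is to assume, for contradiction, that $r_{km}\neq 0$ for some $m$ with $s_{km}=0$, and to produce a single pair $(\mathbf{e},\mathbf{f})\in\mathbb{R}^n\times\mathbb{R}^n$ witnessing a violation of \eqref{eqArg2RuleBilinear}: one that satisfies the premise $\mathbf{e}^T M_r\mathbf{f}\geq\lambda_r$ but not the conclusion $\mathbf{e}^T M_s\mathbf{f}\geq\lambda_s$. First I would record the easy observation that $l\neq m$, since $s_{kl}\neq 0=s_{km}$. Writing $\mathbf{u}_i$ for the $i$-th standard basis vector of $\mathbb{R}^n$, I would then take $\mathbf{e}=\mathbf{u}_k$ and $\mathbf{f}=b\,\mathbf{u}_l+c\,\mathbf{u}_m$ for scalars $b,c$ still to be fixed, which yields the clean identities $\mathbf{e}^T M_r\mathbf{f}=b\,r_{kl}+c\,r_{km}$ and $\mathbf{e}^T M_s\mathbf{f}=b\,s_{kl}+c\,s_{km}=b\,s_{kl}$, the last equality using $s_{km}=0$.

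Next I would fix the scalars so that the two quantities behave oppositely. Using $s_{kl}\neq 0$, put $b=(\lambda_s-1)/s_{kl}$, so that $\mathbf{e}^T M_s\mathbf{f}=\lambda_s-1<\lambda_s$; then, using the contradiction hypothesis $r_{km}\neq 0$, put $c=(\lambda_r-b\,r_{kl})/r_{km}$, so that $\mathbf{e}^T M_r\mathbf{f}=\lambda_r\geq\lambda_r$. This pair $(\mathbf{e},\mathbf{f})$ satisfies the antecedent of \eqref{eqArg2RuleBilinear} while failing its consequent, contradicting the standing assumption that $M_r,M_s$ satisfy \eqref{eqArg2RuleBilinear}; hence $r_{km}=0$, as claimed.

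I do not expect a genuine obstacle here. The only design choice that matters is the shape of the test vectors — a single basis vector $\mathbf{u}_k$ in the first slot and a two-term combination $b\,\mathbf{u}_l+c\,\mathbf{u}_m$ in the second — which is precisely what decouples the $M_s$-value (governed by $b$ through the nonzero entry $s_{kl}$) from the $M_r$-value (governed by $c$ through the assumed-nonzero entry $r_{km}$). Choosing $b$ and $c$ by explicit division, rather than through inequalities, makes the argument insensitive to the signs of $\lambda_r,\lambda_s,s_{kl},r_{km}$, so no case analysis is needed; the only step easy to overlook is the remark $l\neq m$, which is what makes the coordinate computations for $\mathbf{e}^T M_r\mathbf{f}$ and $\mathbf{e}^T M_s\mathbf{f}$ valid.
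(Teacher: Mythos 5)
Your proof is correct and follows essentially the same route as the paper's: take $\mathbf{e}=\mathbf{u}_k$ and $\mathbf{f}$ supported on coordinates $l$ and $m$, use $s_{kl}\neq 0$ to pin the $M_s$-value below $\lambda_s$ and $r_{km}\neq 0$ to push the $M_r$-value up to $\lambda_r$. Your explicit choices $b=(\lambda_s-1)/s_{kl}$ and $c=(\lambda_r-b\,r_{kl})/r_{km}$ are in fact a cleaner rendering of the same witness, since they sidestep the sign-dependent tuning of the two free parameters in the paper's version.
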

\begin{proof}
Assume that there exists some index $m$ such that $s_{km}=0$ but $r_{km}\neq 0$. Then we show that \eqref{eqArg2RuleBilinear} cannot be satisfied. We define $\mathbf{e} = (e_1,...,e_n)$ and $\mathbf{f} = (f_1,...,f_n)$ as follows:
\begin{align*}
e_i &= 
\begin{cases}
0 & \text{if $i\neq k$}\\
1 & \text{otherwise}
\end{cases}&
f_i &= 
\begin{cases}
0 & \text{if $i\notin \{l,m\}$}\\
\frac{K}{r_{kl}} & \text{if $i=l$}\\
\frac{L}{s_{km}} & \text{if $i=m$}
\end{cases}
\end{align*}
Then we have:
\begin{align*}
\mathbf{e}^T  M_r  \mathbf{f} \geq \lambda_r &= K + \frac{L \cdot r_{km}}{s_{km}}\\
\mathbf{e}^T M_s \mathbf{f}  &= L
\end{align*}
We can choose for $L$ an arbitrary value such that $L< \lambda_s$, and then choose $K$ such that $K + \frac{L \cdot r_{km}}{s_{km}} \geq \lambda_r$. It follows that \eqref{eqArg2RuleBilinear} is not satisfied for $M_r$ and $M_s$.
\end{proof}

\begin{lemma}\label{lemmaBilinear2}
Suppose $M_r$ and $M_s$ are matrices for which \eqref{eqArg2RuleBilinear} is satisfied. Let $k,l \in \{1,...,n\}$ be such that $s_{kl} \neq 0$. For each $m\in \{1,...,n\}$ such that $s_{ml}=0$ it holds that $r_{ml}=0$.
\end{lemma}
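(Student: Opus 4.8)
The statement is the column-analogue of Lemma~\ref{lemmaBilinear1}, and the cleanest route is to deduce it from Lemma~\ref{lemmaBilinear1} by transposing. The key observation is that the hypothesis \eqref{eqArg2RuleBilinear} is symmetric under swapping the two arguments: for any $\mathbf{e},\mathbf{f}\in\mathbb{R}^n$ and any matrix $M$ we have $\mathbf{e}^T M \mathbf{f} = \mathbf{f}^T M^T \mathbf{e}$, and in \eqref{eqArg2RuleBilinear} the vectors $\mathbf{e}$ and $\mathbf{f}$ range independently over all of $\mathbb{R}^n$. Hence the pair $M_r^T, M_s^T$ satisfies \eqref{eqArg2RuleBilinear} with the same thresholds $\lambda_r,\lambda_s$ whenever $M_r, M_s$ does.

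Given this, I would apply Lemma~\ref{lemmaBilinear1} to $M_r^T, M_s^T$. Writing $r'_{ij}=r_{ji}$ and $s'_{ij}=s_{ji}$ for the entries of the transposed matrices, the hypothesis $s_{kl}\neq 0$ becomes $s'_{lk}\neq 0$ and the hypothesis $s_{ml}=0$ becomes $s'_{lm}=0$. Applying Lemma~\ref{lemmaBilinear1} with row index $l$, distinguished (nonzero) column $k$, and zero column $m$ yields $r'_{lm}=0$, i.e.\ $r_{ml}=0$, which is exactly the claim. The only care needed is the index bookkeeping in this translation.

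An alternative, essentially the same argument unfolded, is to repeat the construction of Lemma~\ref{lemmaBilinear1} with the roles of $\mathbf{e}$ and $\mathbf{f}$ interchanged. Suppose for contradiction that $s_{kl}\neq 0$, $s_{ml}=0$, but $r_{ml}\neq 0$. Take $\mathbf{f}$ supported on the single coordinate $l$ and $\mathbf{e}$ supported on the two coordinates $k$ and $m$; then $\mathbf{e}^T M_s \mathbf{f}=e_k s_{kl}$ depends only on $e_k$ (the $e_m$ term vanishes because $s_{ml}=0$), while $\mathbf{e}^T M_r \mathbf{f}=e_k r_{kl}+e_m r_{ml}$ is affine in $e_m$ with nonzero slope. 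First choose $e_k$ so that $\mathbf{e}^T M_s\mathbf{f}<\lambda_s$ (possible since $s_{kl}\neq 0$ makes $e_k\mapsto e_k s_{kl}$ surjective onto $\mathbb{R}$), and then choose $e_m$ in the appropriate direction and large enough that $\mathbf{e}^T M_r\mathbf{f}\geq\lambda_r$. This violates \eqref{eqArg2RuleBilinear}, a contradiction.

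I do not anticipate a genuine obstacle here; the result is a routine symmetry variant of Lemma~\ref{lemmaBilinear1}. The only delicate points are (i) checking that the independence of $\mathbf{e}$ and $\mathbf{f}$ in \eqref{eqArg2RuleBilinear} indeed makes the condition invariant under transposition, and (ii) keeping the swapped row/column indices consistent when invoking Lemma~\ref{lemmaBilinear1}.
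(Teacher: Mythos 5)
Your proposal is correct and matches the paper, whose entire proof of this lemma is the remark that it is ``entirely analogous'' to Lemma~\ref{lemmaBilinear1} with the roles of $\mathbf{e}$ and $\mathbf{f}$ interchanged --- exactly your unfolded second argument. Your primary route via transposition (observing that \eqref{eqArg2RuleBilinear} is preserved under replacing $M_r,M_s$ by $M_r^T,M_s^T$ because $\mathbf{e}^T M \mathbf{f}=\mathbf{f}^T M^T \mathbf{e}$ and the quantifiers range over all pairs) is just a clean formalization of that same symmetry, and your index bookkeeping ($k'=l$, $l'=k$, $m'=m$, giving $r'_{lm}=r_{ml}=0$) checks out.
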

\begin{proof}
Entirely analogous to the proof of Lemma \ref{lemmaBilinear1}.
\end{proof}

\begin{lemma}\label{lemmaBilinearA}
Suppose the indices $1 \leq k,l,m \leq$ are such that $(r_{kl},r_{km}) \neq (\alpha s_{kl},\alpha s_{km})$ for all $\alpha \in \mathbb{R}$, and assume $s_{kl},s_{km}\neq 0$. Then it holds that $M_r$ and $M_s$ cannot satisfy \eqref{eqArg2RuleBilinear}.
\end{lemma}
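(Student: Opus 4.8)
The plan is to test the implication \eqref{eqArg2RuleBilinear} on a one-parameter family of vectors that isolates the four entries $r_{kl},r_{km},s_{kl},s_{km}$. As in the proofs of Lemmas \ref{lemmaBilinear1} and \ref{lemmaBilinear2}, the opening move is to take $\mathbf{e}$ to be the $k$-th standard basis vector and to let $\mathbf{f}$ be supported only on the coordinates $l$ and $m$. Then $\mathbf{e}^T M_r \mathbf{f} = r_{kl} f_l + r_{km} f_m$ and $\mathbf{e}^T M_s \mathbf{f} = s_{kl} f_l + s_{km} f_m$, so the question reduces to a two-dimensional one about the pair of linear forms $(r_{kl},r_{km})$ and $(s_{kl},s_{km})$.

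The key observation is that the hypothesis — that $(r_{kl},r_{km})$ is not a scalar multiple of $(s_{kl},s_{km})$, together with $s_{kl},s_{km}\neq 0$ — is equivalent to $\Delta := r_{kl}s_{km}-r_{km}s_{kl}\neq 0$. I would then exhibit an explicit $\mathbf{f}$ depending on a scalar $t$, namely $f_l = \tfrac{\lambda_s-1}{2s_{kl}} + t\,s_{km}$ and $f_m = \tfrac{\lambda_s-1}{2s_{km}} - t\,s_{kl}$ (all other coordinates zero), which is exactly where the assumption $s_{kl},s_{km}\neq 0$ is genuinely used. A one-line computation shows $s_{kl}f_l + s_{km}f_m = \lambda_s - 1$ for every $t$, whereas $r_{kl}f_l + r_{km}f_m$ is an affine function of $t$ with slope precisely $\Delta\neq 0$, hence unbounded above.

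To finish, pick $t$ of the appropriate sign (according to that of $\Delta$) and of large enough magnitude that $r_{kl}f_l + r_{km}f_m \geq \lambda_r$; for this $\mathbf{e}$ and $\mathbf{f}$ we get $\mathbf{e}^T M_r \mathbf{f}\geq\lambda_r$ while $\mathbf{e}^T M_s \mathbf{f} = \lambda_s - 1 < \lambda_s$, contradicting \eqref{eqArg2RuleBilinear}; thus no such $M_r,M_s$ (with any thresholds) can satisfy the rule. There is no serious obstacle here: the only care needed is the bookkeeping that turns ``not a scalar multiple'' into ``$\Delta\neq 0$'' and the verification that the relevant affine function of $t$ really has nonzero slope, so that the threshold $\lambda_r$ can indeed be exceeded.
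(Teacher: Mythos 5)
Your proposal is correct and follows essentially the same route as the paper: take $\mathbf{e}$ to be the $k$-th basis vector, support $\mathbf{f}$ on coordinates $l$ and $m$, and exploit the non-proportionality of the two linear forms $(r_{kl},r_{km})$ and $(s_{kl},s_{km})$ to force $\mathbf{e}^T M_r\mathbf{f}\geq\lambda_r$ while keeping $\mathbf{e}^T M_s\mathbf{f}<\lambda_s$. Your parametrization along the level set $s_{kl}f_l+s_{km}f_m=\lambda_s-1$ is a minor (and slightly cleaner) variant of the paper's choice, since it avoids the case split on whether $r_{kl}\neq 0$ or $r_{km}\neq 0$.
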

\begin{proof}
Note that the assumptions imply that either $r_{kl}\neq 0$ or $r_{km}\neq 0$. Let us assume for instance that $r_{kl}\neq 0$; the case where $r_{km}\neq 0$ is entirely analogous.
We define $\mathbf{e} = (e_1,...,e_n)$ and $\mathbf{f} = (f_1,...,f_n)$ as follows:
\begin{align*}
e_i &= 
\begin{cases}
0 & \text{if $i\neq k$}\\
1 & \text{otherwise}
\end{cases}&
f_i &= 
\begin{cases}
0 & \text{if $i\notin \{l,m\}$}\\
\frac{K}{s_{kl}} & \text{if $i=l$}\\
\frac{L}{s_{km}} & \text{if $i=m$}
\end{cases}
\end{align*}
Then we have:
\begin{align*}
\mathbf{e}^T  M_r  \mathbf{f} &= K \cdot \frac{r_{kl}}{s_{kl}}+ L \cdot \frac{r_{km}}{s_{km}}\\
\mathbf{e}^T M_s \mathbf{f} &= K + L 
\end{align*}
Using the assumption we made that $r_{kl}\neq 0$ we can choose
$$
K = \lambda_r \frac{s_{kl}}{r_{kl}}   - L\cdot \frac{r_{km} s_{kl}}{s_{km} r_{kl}}
$$
which guarantees $\mathbf{e}^T  M_r  \mathbf{f}  = \lambda_r$. To guarantee that $\mathbf{e}^T M_s \mathbf{f} < \lambda_s$, for this particular choice of $K$, we need to ensure:
\begin{align*}
\lambda_s > \lambda_r \frac{s_{kl}}{r_{kl}}   - L\cdot \frac{r_{km} s_{kl}}{s_{km} r_{kl}}   + L
\end{align*}
Noting that $\frac{r_{km} s_{kl}}{s_{km} r_{kl}}\neq 1$ since we assumed $\frac{r_{kl}}{s_{kl}} \neq \frac{r_{km}}{s_{km}}$, this is either equivalent to one of
\begin{align*}
L  &< \frac{\lambda_s - \lambda_r \frac{s_{kl}}{r_{kl}} }{1 - \frac{r_{km} s_{kl}}{s_{km} r_{kl}}}
&L  &> \frac{\lambda_s - \lambda_r \frac{s_{kl}}{r_{kl}} }{1 - \frac{r_{km} s_{kl}}{s_{km} r_{kl}}}
\end{align*}
depending on the sign of $1 - \frac{r_{km} s_{kl}}{s_{km} r_{kl}}$. In particular, it follows that we can always ensure $\mathbf{e}^T M_s \mathbf{f} < \lambda_s$ by either choosing $L$ to be sufficiently small or sufficiently large.
\end{proof}

\begin{lemma}
Suppose the indices $1 \leq k,l,m \leq$ are such that $(r_{lk},r_{mk}) \neq (\alpha s_{lk},\alpha s_{mk})$ for all $\alpha \in \mathbb{R}$, and assume $s_{lk},s_{mk}\neq 0$. Then it holds that $M_r$ and $M_s$ cannot satisfy \eqref{eqArg2RuleBilinear}.
\end{lemma}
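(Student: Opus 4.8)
The plan is to observe that this statement is simply the ``column'' analogue of Lemma~\ref{lemmaBilinearA}, and to obtain it by transposition. First I would record the elementary identity $\mathbf{e}^T M \mathbf{f} = \mathbf{f}^T M^T \mathbf{e}$, which shows that the pair $(M_r,M_s)$ satisfies \eqref{eqArg2RuleBilinear} if and only if the pair $(M_r^T,M_s^T)$ satisfies \eqref{eqArg2RuleBilinear} (one just renames the universally quantified vectors $\mathbf{e}$ and $\mathbf{f}$). Next I would note that $(M_r^T)_{kl}=r_{lk}$, $(M_r^T)_{km}=r_{mk}$, and likewise $(M_s^T)_{kl}=s_{lk}$, $(M_s^T)_{km}=s_{mk}$; hence the hypothesis that $(r_{lk},r_{mk})\neq(\alpha s_{lk},\alpha s_{mk})$ for all $\alpha\in\mathbb{R}$ together with $s_{lk},s_{mk}\neq 0$ is precisely the hypothesis of Lemma~\ref{lemmaBilinearA} applied to the row $k$ and columns $l,m$ of the matrices $M_r^T$ and $M_s^T$. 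Lemma~\ref{lemmaBilinearA} then tells us that $M_r^T$ and $M_s^T$ cannot satisfy \eqref{eqArg2RuleBilinear}, and by the identity above neither can $M_r$ and $M_s$.

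Alternatively, for a self-contained argument, the plan is to repeat the construction from the proof of Lemma~\ref{lemmaBilinearA} with the roles of $\mathbf{e}$ and $\mathbf{f}$ exchanged. One takes $\mathbf{f}$ to be the $k$-th unit vector and lets $\mathbf{e}$ be supported on the coordinates $l$ and $m$, with $e_l = K/s_{lk}$ and $e_m = L/s_{mk}$ for free scalars $K,L$. A direct computation gives $\mathbf{e}^T M_r \mathbf{f} = K\,\frac{r_{lk}}{s_{lk}} + L\,\frac{r_{mk}}{s_{mk}}$ and $\mathbf{e}^T M_s \mathbf{f} = K + L$. One then chooses $K$ and $L$ so that the first quantity equals $\lambda_r$ while the second remains strictly below $\lambda_s$, which violates \eqref{eqArg2RuleBilinear}; the non-proportionality hypothesis guarantees $\frac{r_{lk}}{s_{lk}}\neq\frac{r_{mk}}{s_{mk}}$, and this inequality is exactly what makes such a choice of $K,L$ possible, while $s_{lk},s_{mk}\neq 0$ keeps all divisions legitimate.

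There is no serious obstacle here; the argument is routine once the transposition observation is in place. The only point requiring a moment of care is the same degenerate case that already appears in Lemma~\ref{lemmaBilinearA}: the hypothesis forces at least one of $r_{lk},r_{mk}$ to be nonzero, since otherwise $(r_{lk},r_{mk})=(0,0)=(0\cdot s_{lk},0\cdot s_{mk})$ would contradict non-proportionality, so the division used to solve for the relevant free scalar is well defined. Both the direct construction and the transposition argument inherit this automatically from Lemma~\ref{lemmaBilinearA}, so the proof is complete after a single line noting the reduction.
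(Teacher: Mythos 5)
Your proposal is correct and matches the paper's argument: the paper likewise proves this lemma by rerunning the construction of Lemma \ref{lemmaBilinearA} with the roles of $\mathbf{e}$ and $\mathbf{f}$ exchanged (i.e.\ $\mathbf{f}$ the $k$-th unit vector and $\mathbf{e}$ supported on coordinates $l,m$ with entries $K/s_{lk}$ and $L/s_{mk}$). Your transposition reformulation is just a tidy way of packaging that same symmetry, so no further comment is needed.
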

\begin{proof}
The proof is entirely analogous to the proof of Lemma \ref{lemmaBilinearA}, starting instead with the following choice for $\mathbf{e}$ and $\mathbf{f}$:
\begin{align*}
e_i &= 
\begin{cases}
0 & \text{if $i\notin \{l,m\}$}\\
\frac{K}{s_{lk}} & \text{if $i=l$}\\
\frac{L}{s_{mk}} & \text{if $i=m$}
\end{cases}&
f_i &= 
\begin{cases}
0 & \text{if $i\neq k$}\\
1 & \text{otherwise}
\end{cases}
\end{align*}
\end{proof}

\noindent From these results, it follows that when $M_r$ and $M_s$ satisfy \eqref{eqArg2RuleBilinear}, it has to be the case that $M_r = \alpha M_s$ for some $\alpha \in \mathbb{R}$. Moreover, it clearly has to be the case that $\alpha\geq 0$.
\subsection{Proof of Proposition \ref{propLimitationSimplE}}

Let us write $\mathbf{r}=(r_1,...,r_n)$ and $\mathbf{ri}=(ri_1,...,ri_n)$, and similar for $s$ and $t$. Let $\textit{sg}$ be the variant of the sign function defined by $\textit{sg}(x)=1$ if $x\geq 0$ and $\textit{sg}(x)=-1$ otherwise.

First note that if $\mathbf{t}=\mathbf{ti}=\mathbf{0}$ then it must be the case that $\lambda_t\leq 0$ and $\lambda_{ti}\leq 0$ and thus that \eqref{eqPropSimlpECompositionRule2} is satisfied. Let us therefore assume that $\mathbf{t}\neq \mathbf{0}$; the case where $\mathbf{ti}\neq \mathbf{0}$ is entirely analogous. We show that we can always find entity vectors for which the body of \eqref{eqTripleImplicationCompositionRule} is satisfied, while $\langle\mathbf{e_h}, \mathbf{t}, \mathbf{g_t} \rangle < \lambda_t$. 

First, let us define $\mathbf{e_h}=(1,....,1)$ and let $\mathbf{g_t}=(-K \cdot \textit{sg}(t_1),....,-K \cdot \textit{sg}(t_n))$, for some $K>0$. By choosing $K$ sufficiently large, it is clear that we can always ensure that $\langle\mathbf{e_h}, \mathbf{t}, \mathbf{g_t} \rangle < \lambda_t$, given that we assumed $\textbf{t}\neq \textbf{0}$. 

In a similar way, we show that the body of \eqref{eqTripleImplicationCompositionRule} can be satisfied. In particular, by defining $\mathbf{f_t} = (K \cdot \textit{sg}(r_1),....,K \cdot \textit{sg}(r_n))$, for a sufficiently large $K$ we have that $\langle\mathbf{e_h}, \mathbf{r}, \mathbf{f_t} \rangle \geq \lambda_r$, provided that $\mathbf{f_t}\neq \mathbf{0}$. If $\mathbf{f_t}= \mathbf{0}$ then we either have that $\langle\mathbf{e_h}, \mathbf{r}, \mathbf{f_t} \rangle \geq \lambda_r$ is satisfied for all choices of the vectors $\mathbf{e_h}$ and $\mathbf{f_t}$ (namely, if $\lambda_r \leq 0$ ) or for no choices. In the former case, we find again that $\langle\mathbf{e_h}, \mathbf{r}, \mathbf{f_t} \rangle \geq \lambda_r$ is satisfied. In the latter case we find that \eqref{eqPropSimlpECompositionRule1} is satisfied.

The remaining vectors are defined as follows:
\begin{align*}
\mathbf{g_h} = (&K \cdot \textit{sg}(r_1)\cdot \textit{sg}(si_1),....,K \cdot \textit{sg}(r_n) \cdot \textit{sg}(si_n)\\
\mathbf{f_h} = (&-K \cdot \textit{sg}(t_1)\cdot \textit{sg}(s_1),....,-K \cdot \textit{sg}(t_n) \cdot \textit{sg}(s_n))\\
\mathbf{e_t} = (&-K \cdot \textit{sg}(t_1)\cdot \textit{sg}(s_1)\cdot \textit{sg}(ri_1),...\\
&\quad\quad\quad\quad\quad\quad....,-K \cdot \textit{sg}(t_n) \cdot \textit{sg}(s_n)\cdot \textit{sg}(ri_n))
\end{align*}
We then clearly have that either \eqref{eqPropSimlpECompositionRule1} is satisfied or that $\langle\mathbf{g_h}, \mathbf{si}, \mathbf{f_t} \rangle \geq \lambda_{si}$, $\langle\mathbf{f_h}, \mathbf{s}, \mathbf{g_t} \rangle \geq \lambda_s$ and $\langle\mathbf{f_h}, \mathbf{ri}, \mathbf{e_t} \rangle \geq \lambda_{ri}$ are satisfied. Note that this would not hold for the standard sign function, as e.g.\ we might have that $\textit{sgn}(r_1)\cdot \textit{sgn}(si_1) = ... = \textit{sgn}(r_n) \cdot \textit{sgn}(si_n) = 0$.

\subsection{Proof of Proposition \ref{propmain}}

\begin{lemma}
Let $\eta$ be the geometric interpretation that was constructed from $\mathcal{M}$, as explained in the main paper. It holds that $\mathcal{M}=\phi(\eta)$.
\end{lemma}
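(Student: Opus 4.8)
The plan is to establish the two inclusions $\mathcal{M}\subseteq\phi(\eta)$ and $\phi(\eta)\subseteq\mathcal{M}$ separately. The first inclusion is immediate from the construction: if $R(o_1,\dots,o_k)\in\mathcal{M}$ then $\eta(o_1)\oplus\dots\oplus\eta(o_k)$ is by definition one of the points whose convex hull is $\eta(R)$, so it belongs to $\eta(R)$ and hence $\eta\models R(o_1,\dots,o_k)$.

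For the converse inclusion, the key observation I would use is that every concatenation of the form $\eta(o_1)\oplus\dots\oplus\eta(o_k)$ is a $0/1$ vector: since each $o_j$ equals some $x_{i_j}$, we have $\eta(o_j)=\mathbf{e}_{i_j}$, the $i_j$-th standard basis vector of $\mathbb{R}^m$, so the concatenation is the vector in $\mathbb{R}^{k\cdot m}$ with exactly one $1$ in each of the $k$ blocks of length $m$ and $0$s elsewhere. Suppose now $\eta\models R(o_1,\dots,o_k)$, and put $\mathbf{v}=\eta(o_1)\oplus\dots\oplus\eta(o_k)$, so that $\mathbf{v}\in\eta(R)=\mathrm{CH}(S)$ where $S=\{\eta(y_1)\oplus\dots\oplus\eta(y_k)\mid R(y_1,\dots,y_k)\in\mathcal{M}\}$ is a finite set of $0/1$ vectors. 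Writing $\mathbf{v}=\sum_i\lambda_i\mathbf{s}_i$ with $\mathbf{s}_i\in S$, $\lambda_i>0$ and $\sum_i\lambda_i=1$, I would argue coordinatewise that $\mathbf{s}_i=\mathbf{v}$ for all $i$: on a coordinate $j$ where $\mathbf{v}$ is $0$, nonnegativity of the entries of the $\mathbf{s}_i$ and positivity of the $\lambda_i$ forces $(\mathbf{s}_i)_j=0$ for every $i$; on a coordinate $j$ where $\mathbf{v}$ is $1$, the bound $(\mathbf{s}_i)_j\le 1$ together with $\sum_i\lambda_i(\mathbf{s}_i)_j=1=\sum_i\lambda_i$ forces $(\mathbf{s}_i)_j=1$ for every $i$. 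Hence $\mathbf{v}\in S$, so $\mathbf{v}=\eta(y_1)\oplus\dots\oplus\eta(y_k)$ for some atom $R(y_1,\dots,y_k)\in\mathcal{M}$. Finally, since the basis vectors $\eta(x_1),\dots,\eta(x_m)$ are pairwise distinct, comparing the $k$ blocks yields $\eta(o_j)=\eta(y_j)\Rightarrow o_j=y_j$ for each $j$, so $R(o_1,\dots,o_k)=R(y_1,\dots,y_k)\in\mathcal{M}$, as desired.

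I do not expect a genuine obstacle in this lemma: the whole argument is essentially the fact that a $0/1$ vector is an extreme point of the cube $[0,1]^{k\cdot m}$ and therefore cannot be a nontrivial convex combination of other points of that cube. The only minor points to keep in mind are that $S$ may contain repeated vectors (harmless) and that the quasi-chainedness hypothesis on $\Sigma$ plays no role here — it will only be needed afterwards, when verifying that the same $\eta$ satisfies Conditions~1 and~2 of Definition~\ref{defGeometricModel}.
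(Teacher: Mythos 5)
Your proof is correct and follows essentially the same route as the paper's: both directions rest on the observation that each concatenation $\eta(o_1)\oplus\dots\oplus\eta(o_k)$ is a $0/1$ vector with a single $1$ per block, so it is an extreme point that can only arise as a convex combination of copies of itself, forcing every tuple in the combination to coincide with $(o_1,\dots,o_k)$. Your coordinatewise write-up is slightly more explicit than the paper's (which only inspects the coordinates equal to $1$), but the argument is the same.
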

\begin{proof}
Clearly, by definition of $\eta$, if $R(x_{i_1},...,x_{i_k}) \in \mathcal{M}$ for some $R\in\mathfrak{R}_k(\mathcal{K})$, we have $\eta(x_{i_1}) \oplus ... \oplus \eta(x_{i_k}) \in \eta(R)$ and thus $R(x_{i_1},...,x_{i_k}) \in \phi(\eta)$. Conversely, assume that $R(x_{i_1},...,x_{i_k}) \in \phi(\eta)$. This means that $\eta(x_{i_1}) \oplus ... \oplus \eta(x_{i_k})$ is in the convex hull of some vectors $\eta(x_{j_{1,1}}) \oplus ... \oplus \eta(x_{j_{k,1}}),...,\eta(x_{j_{1,p}}) \oplus ... \oplus \eta(x_{j_{k,p}})$ such that $R(x_{j_{1,1}},...,x_{j_{k,1}}),...,R(x_{j_{1,p}},...,x_{j_{k,p}})$ are all in $\mathcal{M}$. Note that $\eta(x_{i_1}) \oplus ... \oplus \eta(x_{i_k})$ has coordinate 1 at indices $i_1, m+i_2,..., (k-1)m + i_k$. It is easy to verify that having a coordinate of 1 at index $(l-1)m + i_l$ is only possible if $x_{i_l} = x_{j_{l,1}} = ... = x_{j_{l,p}}$. Since this holds for all $l$, it follows that $(x_{i_1},...,x_{i_k}) = (x_{j_{1,1}},...,x_{j_{k,1}}) = ... = (x_{j_{1,p}},...,x_{j_{k,p}})$, and thus that $R(x_{i_1},...,x_{i_k})\in \mathcal{M}$.
\end{proof}

We now show that the remaining conditions from Definition \ref{defGeometricModel} are indeed satisfied. Let $\{{\bf v_1},...,{\bf v_n}\} \subseteq \mathbb{R}^m$,  $a_1,...,a_n$ be fresh constants and  $\eta^*$ be the extension of $\eta$ to $ \mathfrak O(\mathcal M) \cup \{a_1,...,a_n\}$ defined by $\eta^*(a_i)=\mathbf v_i$. 

We first consider the case where $\Sigma$ is a datalog program. In this case, it is sufficient to show that $\mathcal{M}^* = \phi(\eta^*)$ is a model of any rule in $\Sigma$. We first show this for rules that have a single atom in their body.

\begin{lemma}\label{lemmaOneAtomBody}
Let $\mathcal{M}^*$ be defined as above, and let $\sigma \in \Sigma$ be a datalog rule of the following form:
$$
R(A_1,...,A_k) \rightarrow S(A_{s_1},...,A_{s_l}) 
$$
where $s_1,...,s_l$ are (not necessarily distinct) indices from $\{1,...,k\}$. Then $\mathcal{M}^*$ is a model of $\sigma$.
\end{lemma}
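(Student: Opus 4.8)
The plan is to verify directly that $\mathcal M^*=\phi(\eta^*)$ satisfies $\sigma$, by transporting the convex-combination witness of a body ground atom through the argument projection $i\mapsto s_i$ that turns the body predicate into the head predicate.

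First I would take an arbitrary ground atom $R(o_1,\dots,o_k)\in\mathcal M^*$ with $o_1,\dots,o_k\in\mathfrak O(\mathcal M)\cup\{a_1,\dots,a_n\}$, aiming to show $S(o_{s_1},\dots,o_{s_l})\in\mathcal M^*$. By Definition~\ref{def:sat}, membership of $R(o_1,\dots,o_k)$ in $\mathcal M^*$ means $\eta^*(o_1)\oplus\dots\oplus\eta^*(o_k)\in\eta(R)$. Since $\mathcal M$ is finite, $\eta(R)=\text{CH}\{\eta(y_1)\oplus\dots\oplus\eta(y_k)\mid R(y_1,\dots,y_k)\in\mathcal M\}$ is the convex hull of finitely many points, so there are tuples $R(y_{1,j},\dots,y_{k,j})\in\mathcal M$, $1\le j\le p$, and coefficients $\mu_j\ge 0$ with $\sum_j\mu_j=1$ such that $\eta^*(o_1)\oplus\dots\oplus\eta^*(o_k)=\sum_{j=1}^p\mu_j\bigl(\eta(y_{1,j})\oplus\dots\oplus\eta(y_{k,j})\bigr)$. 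Reading off the $i$-th block of $m$ coordinates, $\eta^*(o_i)=\sum_{j=1}^p\mu_j\,\eta(y_{i,j})$ for every $i\in\{1,\dots,k\}$.

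Next I would invoke that $\mathcal M\models\sigma$: since $R(y_{1,j},\dots,y_{k,j})\in\mathcal M$, we get $S(y_{s_1,j},\dots,y_{s_l,j})\in\mathcal M$, whence $\eta(y_{s_1,j})\oplus\dots\oplus\eta(y_{s_l,j})\in\eta(S)$ straight from the construction of $\eta(S)$. As each head position $s_t$ (for $1\le t\le l$) lies in $\{1,\dots,k\}$, assembling the corresponding blocks of the displayed identities gives $\eta^*(o_{s_1})\oplus\dots\oplus\eta^*(o_{s_l})=\sum_{j=1}^p\mu_j\bigl(\eta(y_{s_1,j})\oplus\dots\oplus\eta(y_{s_l,j})\bigr)$, a convex combination of points of $\eta(S)$, hence a point of $\eta(S)$ by convexity. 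By Definition~\ref{def:sat} this is exactly $S(o_{s_1},\dots,o_{s_l})\in\phi(\eta^*)=\mathcal M^*$, as required.

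I expect the whole content of the argument to be the observation that selecting (and possibly repeating) coordinate blocks commutes with forming convex combinations, together with the fact that a datalog rule with a single body atom has all of its head positions among the body positions, so the head vector is literally a reshuffle of the already-constrained body vector; no genuine obstacle arises at this single-atom stage, since the real work (where quasi-chainedness enters) is deferred to the multi-atom case. The one point to keep in mind is that the body atom $R(A_1,\dots,A_k)$ is to be read with pairwise distinct variables, as in the normal forms we consider, since otherwise the step ``$R(y_{1,j},\dots,y_{k,j})\in\mathcal M$ implies $S(y_{s_1,j},\dots,y_{s_l,j})\in\mathcal M$'' could fail; everything else is routine bookkeeping.
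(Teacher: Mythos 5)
Your proof is correct and follows essentially the same route as the paper's: decompose the body point as a convex combination of concatenations coming from $R$-facts of $\mathcal M$, push each such fact through $\sigma$ (using that $\mathcal M$ is a model) to land in $\eta(S)$, and conclude by convexity that the reshuffled blocks for the head also lie in $\eta(S)$. Your closing caveat about repeated variables in the body atom is a fair observation but lies outside what the lemma as stated requires; the paper's argument makes the same implicit assumption.
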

\begin{proof}
Suppose $R(z_1,...,z_k) \in \mathcal{M}^*$. Then by the definition of $\mathcal{M}^*$, it holds that 
$$
\eta^*(z_1)\oplus ... \oplus \eta^*(z_k) \in \eta^*(R)
$$
This means that there exist instances $(a_1^1,...,a_k^1),\allowbreak...,\allowbreak(a_1^q,...,a_k^q)$ of $R$, with each $a_i^j$ from $\mathfrak{O}(\mathcal{M})$, such that for $j\in \{1,...,k\}$ it holds that
\begin{align*}
\eta^*(z_j) = \lambda_1 \eta^*(a_j^1) + ... +  \lambda_q \eta^*(a_j^q)
\end{align*}
where $\lambda_1,...,\lambda_q > 0$ and $\sum_i \lambda_i=1$. Since $\mathcal{M}=\phi(\eta)$ is a model of $\sigma$ it must hold that $(a_{s_1}^1,...,a_{s_l}^1),...,(a_{s_1}^q,...,a_{s_l}^q)$ are all instances of $S$, and thus for each $i\in\{1,...,q\}$ it holds that
$$
\eta^*(a_{s_1}^i)\oplus ... \oplus \eta^*(a_{s_l}^i) \in \eta^*(S)
$$
Since $\eta^*(S)$ is convex, we thus also have that
$$
(\sum_i \lambda_i \eta^*(a_{s_1}^i)\oplus ... \oplus \eta^*(a_{s_l}^i)) \in \eta^*(S)
$$
or equivalently
$$
\eta^*(z_{s_1})\oplus ... \oplus \eta^*(z_{s_l}) \in \eta^*(S)
$$
which is what we needed to show.
\end{proof}

Below we show the analogue of Lemma \ref{lemmaOneAtomBody} for rules with two atoms in their body. However, we first need to show a technical lemma.
For $A \subseteq \mathbb{R}^p$ and $B \subseteq \mathbb{R}^l$, we define the set $A\oplus B \subseteq \mathbb{R}^{p+l}$ as 
$A \oplus B= \{ \mathbf{a} \oplus \mathbf{b} \,|\,  \mathbf{a}\in A, \mathbf{b}\in B\}$, i.e.\ it is the set of all concatenations of elements from $A$ with elements from $B$.
\begin{lemma}\label{lemmaRelationComposition}
Let $\eta^*$ and $\mathcal{M}^*$ be defined as before. Let $R\in \mathfrak{R}_k(\mathcal{K})$ and $S\in\mathfrak{R}_l(\mathcal{K})$.  Let $z_1,...,z_{k+l-1} \in \mathfrak{O}(\mathcal{M}) \cup \{a_1,...,a_n\}$. We have that $R(z_1,...,z_k)\in \mathcal{M}^*$ and $S(z_k,z_{k+1},...,z_{k+l-1}) \in \mathcal{M}^*$ iff $\eta^*(z_1) \oplus ... \oplus \eta^*(z_{k+l-1})$ belongs to
\begin{align}\label{eqLemmaRelationCompositionA}
(\eta^*(R) \oplus \mathbb{R}^{m(l-1)}) \cap (\mathbb{R}^{m(k-1)} \oplus \eta^*(S)) 
\end{align}
\end{lemma}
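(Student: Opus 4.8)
The plan is to prove both directions of the biconditional by simply unwinding the definitions of $\mathcal{M}^{*}=\phi(\eta^{*})$, of satisfaction of ground atoms (Definition~\ref{def:sat}), and of the concatenation operation $\oplus$, while keeping careful track of which block of $m$ coordinates corresponds to which argument. Set $\mathbf{w} = \eta^{*}(z_1)\oplus\dots\oplus\eta^{*}(z_{k+l-1})\in\mathbb{R}^{m(k+l-1)}$. Since each $\eta^{*}(z_i)\in\mathbb{R}^{m}$, the first $mk$ coordinates of $\mathbf{w}$ are exactly $\eta^{*}(z_1)\oplus\dots\oplus\eta^{*}(z_k)$, and the last $ml$ coordinates of $\mathbf{w}$ are exactly $\eta^{*}(z_k)\oplus\dots\oplus\eta^{*}(z_{k+l-1})$; note that the block belonging to $z_k$ sits at the boundary and is shared between the two descriptions, which mirrors the fact that $z_k$ is the argument common to both atoms $R(z_1,\dots,z_k)$ and $S(z_k,\dots,z_{k+l-1})$.

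For the first conjunct, I would observe that membership of a point in a set of the form $A\oplus\mathbb{R}^{d}$ with $A\subseteq\mathbb{R}^{mk}$ depends only on whether its first $mk$ coordinates lie in $A$, since the factor $\mathbb{R}^{d}$ imposes no constraint. Applying this with $A=\eta^{*}(R)$ and $d=m(l-1)$, and using the coordinate identification above, $\mathbf{w}\in\eta^{*}(R)\oplus\mathbb{R}^{m(l-1)}$ holds iff $\eta^{*}(z_1)\oplus\dots\oplus\eta^{*}(z_k)\in\eta^{*}(R)$, which by Definition~\ref{def:sat} and $\mathcal{M}^{*}=\phi(\eta^{*})$ is precisely the statement $R(z_1,\dots,z_k)\in\mathcal{M}^{*}$. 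Symmetrically, $\mathbf{w}\in\mathbb{R}^{m(k-1)}\oplus\eta^{*}(S)$ holds iff the last $ml$ coordinates of $\mathbf{w}$, namely $\eta^{*}(z_k)\oplus\dots\oplus\eta^{*}(z_{k+l-1})$, lie in $\eta^{*}(S)\subseteq\mathbb{R}^{ml}$, i.e.\ iff $S(z_k,z_{k+1},\dots,z_{k+l-1})\in\mathcal{M}^{*}$. Intersecting the two conditions gives exactly the claimed equivalence.

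The only point that needs care is the dimension bookkeeping: one must check that $\eta^{*}(R)\subseteq\mathbb{R}^{mk}$ together with $\mathbb{R}^{m(l-1)}$ yields the ambient space $\mathbb{R}^{m(k+l-1)}$ of $\mathbf{w}$, that $\mathbb{R}^{m(k-1)}$ together with $\eta^{*}(S)\subseteq\mathbb{R}^{ml}$ yields the same ambient space, and that the shared block at position $k$ is correctly identified with $z_k$. I expect this to be the main (and only) source of fiddliness rather than a genuine obstacle; the lemma is essentially a reformulation whose purpose is to let the subsequent argument handle rules whose body is a two-atom chain $R(\dots,z_k)\wedge S(z_k,\dots)$, with the general quasi-chained case reduced to this shape by permuting arguments, which corresponds to a permutation of coordinate blocks and hence preserves convexity.
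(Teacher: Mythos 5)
Your proof is correct for the lemma as literally stated, but it takes a genuinely different (and much lighter) route than the paper for the ``only if'' direction. You observe that membership in $\eta^*(R)\oplus\mathbb{R}^{m(l-1)}$ is just a condition on the first $mk$ coordinates and membership in $\mathbb{R}^{m(k-1)}\oplus\eta^*(S)$ a condition on the last $ml$ coordinates, so both directions reduce to reading off the definition of $\phi(\eta^*)$; the coordinate bookkeeping you flag is indeed the only issue, and it checks out. The paper's ``if'' direction is the same projection observation, but its converse is a substantially heavier constructive argument: starting from the two separate convex-combination representations of $\eta^*(z_1)\oplus\cdots\oplus\eta^*(z_k)$ (over instances of $R$ in $\mathcal{M}$) and of $\eta^*(z_k)\oplus\cdots\oplus\eta^*(z_{k+l-1})$ (over instances of $S$), it uses the one-hot structure of $\eta$ on $\mathfrak{O}(\mathcal{M})$ and the fact that exactly one argument block is shared to splice these into a single convex combination of points $\eta^*(a_1^i)\oplus\cdots\oplus\eta^*(a_k^i)\oplus\eta^*(b_{k+1}^j)\oplus\cdots\oplus\eta^*(b_{k+l-1}^j)$ with $a_k^i=b_k^j$, i.e.\ over tuples that are \emph{simultaneously} instances of $R$ and $S$ in $\mathcal{M}$. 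That stronger decomposition is not recorded in the lemma statement, but it is exactly what the proof of Lemma~\ref{lemmaRulesTwoBodyPredicate} invokes when it asserts that membership in \eqref{eqLemmaRelationCompositionA} yields joint witness tuples $(a_1^i,\ldots,a_{k+l-1}^i)$; it is also where quasi-chainedness (a single shared variable) is actually used. So your argument buys a clean, minimal proof of the stated biconditional, while the paper's buys the structural fact needed downstream --- if your proof replaced the paper's, the convex-combination decomposition would still have to be established somewhere before Lemma~\ref{lemmaRulesTwoBodyPredicate} could go through.
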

\begin{proof}
Clearly if \eqref{eqLemmaRelationCompositionA} holds, we have $\eta^*(z_1) \oplus ... \oplus\eta^*(z_k) \in \eta^*(R)$ and $\eta^*(z_k) \oplus ... \oplus\eta^*(z_{k+l-1}) \in \eta^*(S)$, and thus by definition of $\mathcal{M}^*$ we have that $R(z_1,...,z_k)\in \mathcal{M}^*$ and $S(z_k,z_{k+1},...,z_{k+l-1}) \in \mathcal{M}^*$.

Now conversely assume that $R(z_1,...,z_k)\in \mathcal{M}^*$ and $S(z_k,z_{k+1},...,z_{k+l-1}) \in \mathcal{M}^*$, or in other words that $\eta^*(z_1) \oplus ... \oplus\eta^*(z_k) \in \eta^*(R)$ and $\eta^*(z_k) \oplus ... \oplus\eta^*(z_{k+l-1}) \in \eta^*(S)$. That means that there are instances $(a_1^1,...,a_k^1),...,(a_1^q,...,a_k^q)$ of $R$ and instances $(b_k^1,...,b_{k+l-1}^1),...,(b_{k}^r,...,b_{k+l-1}^r)$ of $S$, with each $a_i^j$ and $b_i^j$ from $\mathfrak{O}(\mathcal{M})$, such that for $j\in \{1,...,k\}$ it holds that
\begin{align*}
\eta^*(z_j) = \lambda_1 \eta^*(a_j^1) + ... +  \lambda_q \eta^*(a_j^q)
\end{align*}
and for $j\in \{k,...,k+l-1\}$ it holds that
\begin{align*}
\eta^*(z_j) = \mu_1 \eta^*(b_j^1) + ... +  \mu_r \eta^*(b_j^r)
\end{align*}
where $\lambda_1,...,\lambda_q > 0$, $\mu_1,...,\mu_r > 0$, $\sum_i \lambda_i=1$ and $\sum_i \mu_i=1$. Note that this means $\lambda_1 \eta^*(a_k^1) + ... +  \lambda_q \eta^*(a_k^q) = \mu_1 \eta^*(b_k^1) + ... +  \mu_r \eta^*(b_k^r)$, which is only possible if $\{a_k^1,...,a_k^q\} = \{b_k^1,...,b_k^r\}$. It follows that $\eta^*(z_1) \oplus ... \oplus\eta^*(z_{k+l-1})$ can be written as:
\begin{align*}
&\sum \{\lambda(a_k^i) \mu(b_k^j) \big( \eta^*(a_1^i) \oplus ...\oplus \eta^*(a_k^i) \oplus \eta^*(b_{k+1}^j) \\
&\quad\quad \oplus ... \oplus \eta^*(b_{k+l-1}^j) \big)\,|\, 1\leq i \leq q, 1\leq j\leq r, a_k^i = b_k^j \}
\end{align*} 
where
\begin{align*}
\lambda(a_k^i) &= \sum_j \{\lambda_j \,|\, a_k^j=a_k^i\} \\
\mu(b_k^i) &= \sum_j \{\mu_j \,|\, b_k^j=b_k^i\}
\end{align*}
Note first of all that this is a convex combination, i.e.\ the weights $\lambda(a_k^i) \mu(b_k^j)$ for all $i$ and $j$ such that $a_k^i = b_k^j$ sum to 1. Furthermore, if $a_k^i=b_k^j$, we also have that 
$$
\eta^*(a_1^i) \oplus ... \oplus \eta^*(a_k^i) \oplus \eta^*(b_{k+1}^j) \oplus ... \oplus \eta^*(b_{k+l-1}^j) 
$$
belongs to the set $(\eta^*(R) \oplus \mathbb{R}^{l-1}) \cap (\mathbb{R}^{k-1} \oplus \eta^*(S)$, from which it follows that \eqref{eqLemmaRelationCompositionA} has to hold.
\end{proof}
The previous lemma essentially tells us that the tuples $(z_1,...,z_{k+l-1})$ that satisfy the body of a rule of the form $R(X_1,...,X_k) \wedge S(X_k,,...,X_{k+l-1})$ are characterized by a particular convex region. When there are two atoms in the body that share more than one variable, however, this property no longer holds, which is intuitively why we need to require quasi-chainedness. Thanks to Lemma \ref{lemmaRelationComposition} we can now easily show the following analogue of Lemma \ref{lemmaOneAtomBody} for rules with two atoms in their body.

\begin{lemma}\label{lemmaRulesTwoBodyPredicate}
Let $\mathcal{M}^*$ be defined as above, and let $\sigma \in \Sigma$ be a datalog rule of the following form:
$$
R_1(A_1,...,A_k) \wedge R_2 (A_k,...,A_{k+l-1}) \rightarrow  R_3(A_{s_1},...,A_{s_q}) 
$$
where $s_1,...,s_q$ are (not necessarily distinct) indices from $\{1,...,k+l-1\}$. Then $\mathcal{M}^*$ is a model of $\sigma$.
\end{lemma}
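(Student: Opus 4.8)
The plan is to reduce the lemma to Lemma~\ref{lemmaRelationComposition} together with the fact that $\mathcal{M}=\phi(\eta)$ is itself a model of $\sigma$, transporting the conclusion along a linear coordinate-selection map and invoking the convexity of $\eta^*(R_3)$. Concretely, let $z_1,\dots,z_{k+l-1}\in \mathfrak{O}(\mathcal{M})\cup\{a_1,\dots,a_n\}$ be such that $R_1(z_1,\dots,z_k)\in\mathcal{M}^*$ and $R_2(z_k,\dots,z_{k+l-1})\in\mathcal{M}^*$; I must show $R_3(z_{s_1},\dots,z_{s_q})\in\mathcal{M}^*$, i.e.\ $\eta^*(z_{s_1})\oplus\dots\oplus\eta^*(z_{s_q})\in\eta^*(R_3)$.

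The first step uses the \emph{proof} of Lemma~\ref{lemmaRelationComposition}, not merely its statement: that proof exhibits $\mathbf{w}:=\eta^*(z_1)\oplus\dots\oplus\eta^*(z_{k+l-1})$ as an explicit convex combination $\mathbf{w}=\sum_t \nu_t\,\mathbf{u}_t$, where each $\mathbf{u}_t=\eta^*(c^t_1)\oplus\dots\oplus\eta^*(c^t_{k+l-1})$ encodes a tuple $(c^t_1,\dots,c^t_{k+l-1})$ of objects from $\mathfrak{O}(\mathcal{M})$ with $R_1(c^t_1,\dots,c^t_k)\in\mathcal{M}$ and $R_2(c^t_k,\dots,c^t_{k+l-1})\in\mathcal{M}$. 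Hence each such tuple satisfies the body of $\sigma$ in $\mathcal{M}$ under the substitution $A_p\mapsto c^t_p$, and since $\mathcal{M}$ is a model of $\sigma$ we get $R_3(c^t_{s_1},\dots,c^t_{s_q})\in\mathcal{M}$, i.e.\ $\eta^*(c^t_{s_1})\oplus\dots\oplus\eta^*(c^t_{s_q})\in\eta^*(R_3)$ for every $t$.

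Next I would introduce the linear map $\pi:\mathbb{R}^{m(k+l-1)}\to\mathbb{R}^{mq}$ that reads off, in order, the $m$-blocks at positions $s_1,\dots,s_q$, sending $\mathbf{x}_1\oplus\dots\oplus\mathbf{x}_{k+l-1}$ to $\mathbf{x}_{s_1}\oplus\dots\oplus\mathbf{x}_{s_q}$ (repetitions among the $s_p$ are harmless). Since $\pi$ is linear it commutes with convex combinations, so $\pi(\mathbf{w})=\sum_t\nu_t\,\pi(\mathbf{u}_t)$; here $\pi(\mathbf{w})=\eta^*(z_{s_1})\oplus\dots\oplus\eta^*(z_{s_q})$ and $\pi(\mathbf{u}_t)=\eta^*(c^t_{s_1})\oplus\dots\oplus\eta^*(c^t_{s_q})\in\eta^*(R_3)$. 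As $\eta^*(R_3)$ is convex, the convex combination $\pi(\mathbf{w})$ lies in $\eta^*(R_3)$, which is exactly the statement $R_3(z_{s_1},\dots,z_{s_q})\in\mathcal{M}^*$.

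The only genuine subtlety, and the place where the quasi-chained shape of $\sigma$ enters, is already absorbed into Lemma~\ref{lemmaRelationComposition}: the two body atoms share the single variable $A_k$, so the $R_1$- and $R_2$-representations of $\mathbf{w}$ can be glued along one common object $c^t_k$, which is precisely what makes $\mathbf{w}$ a convex combination of body-tuple encodings. If the body atoms shared two or more variables this gluing would fail (cf.\ Example~\ref{counterExampleNonChainRules}) and the decomposition would break down. Given Lemma~\ref{lemmaRelationComposition}, the remaining work is pure bookkeeping about block indices, handled cleanly by $\pi$, so I anticipate no further obstacle.
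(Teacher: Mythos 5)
Your proof is correct and follows essentially the same route as the paper's: extract from (the proof of) Lemma~\ref{lemmaRelationComposition} a convex-combination decomposition of the concatenated body tuple into encodings of genuine body tuples of $\mathcal{M}$, apply that $\mathcal{M}$ is a model of $\sigma$ to each of them, and conclude by convexity of $\eta^*(R_3)$. Your explicit coordinate-selection map $\pi$ merely makes precise a step the paper performs implicitly, and your remark that one needs the decomposition established inside the proof of Lemma~\ref{lemmaRelationComposition} (not just its bare statement) is an accurate reading of what the paper's argument actually uses.
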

\begin{proof}
The proof is similar to the proof of Lemma \ref{lemmaOneAtomBody}.
In particular, suppose $R_1(z_1,...,z_k) \in M^*$ and $R_2(z_k,...,z_{k+l-1})\in \mathcal{M}^*$, with $z_1,...,z_{k+l-1} \in \mathfrak{O}(\mathcal{M}) \cup \{a_1,...,a_n\}$. Then by Lemma \ref{eqLemmaRelationCompositionA} it holds that 
$$
\eta^*(z_1) \oplus ... \oplus \eta^*(z_{k+l-1}) \in (\eta^*(R) \oplus \mathbb{R}^{l-1}) \cap (\mathbb{R}^{k-1} \oplus \eta^*(S)) 
$$
This means that there exist tuples $(a_1^1,...,a_{k+l-1}^1),\allowbreak...,\allowbreak(a_1^q,...,a_{k+l-1}^q)$, with each $a_i^j$ from $\mathfrak{O}(\mathcal{M})$,  such that for each $i$ it holds that $R_1(a_1^i,...,a_k^i) \in M$ and $R_2(a_k^i,...,a_{k+l-1}^i) \in \mathcal{M}$ and such that for $j\in \{1,...,k+l-1\}$ it holds that
\begin{align*}
\eta^*(z_j) = \lambda_1 \eta^*(a_j^1) + ... +  \lambda_q \eta^*(a_j^q)
\end{align*}
where $\lambda_1,...,\lambda_q > 0$ and $\sum_i \lambda_i=1$. Since $\mathcal{M}$ is a model of $\sigma$ it must hold that each $(a_{s_1}^i,...,a_{s_{q}}^i)$ is an instance of $R_3$, and thus for each $i\in\{1,...,q\}$ it holds that
$$
\eta^*(a_{s_1}^i)\oplus ... \oplus \eta^*(a_{s_q}^i) \in \eta^*(R_3)
$$
Since $\eta^*(R_3)$ is convex, we thus also have that
$$
(\sum_i \lambda_i \eta^*(a_{s_1}^i)\oplus ... \oplus \eta^*(a_{s_q}^i)) \in \eta^*(R_3)
$$
or equivalently
$$
\eta^*(z_{s_1})\oplus ... \oplus \eta^*(z_{s_q}) \in \eta^*(S)
$$
which is what we needed to show.
\end{proof}

The proof for datalog rules with more than two atoms in the body follows from the fact that such rules can be rewritten as datalog programs with only two atoms in each rule body, by introducing a number of fresh relation symbols. This process allows us to prove the following lemma.
\begin{lemma}\label{lemmaRulesManyBodyPredicate}
Let $\mathcal{M}^*$ be defined as above, and let $\sigma \in \Sigma$ be a quasi-chained datalog rule of the following form:
$$
R_1(A_1^1,...,A^1_{n_1}) \wedge ... \wedge R_k(A_1^k,...,A^k_{n_k}) \rightarrow S(B_1,...,B_l)
$$
where $\{B_1,...,B_l\} \subseteq \{A_1^1,...,A^k_{n_k}\}$. Then $\mathcal{M}^*$ is a model of $\sigma$.
\end{lemma}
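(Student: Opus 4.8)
The plan is to peel the body of $\sigma$ apart along its ``chain'', rewriting $\sigma$ as a cascade of datalog rules each with at most two body atoms, and then to invoke Lemma~\ref{lemmaRulesTwoBodyPredicate} on the links of the cascade and Lemma~\ref{lemmaOneAtomBody} on the final projection onto the head. Assume each body atom of $\sigma$ has pairwise distinct variables (this is needed only at this point, and it holds for the description-logic and existential-rule fragments that motivate the quasi-chained class). For $1\le i\le k$ let $V_i$ be a fixed enumeration of $\mathsf{vars}(B_1)\cup\ldots\cup\mathsf{vars}(B_i)$, where $B_j = R_j(A_1^j,\ldots,A_{n_j}^j)$, and introduce a fresh relation symbol $P_i$ of arity $|V_i|$. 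Quasi-chainedness guarantees that $\mathsf{vars}(B_{i+1})$ shares at most one variable $Z_{i+1}$ with $V_i$. I would then consider the datalog program $\Pi$ over the enlarged signature containing: (a) the rule $B_1 \to P_1(V_1)$; (b) for each $1\le i<k$ the link rule
\begin{align*}
P_i(V_i)\wedge R_{i+1}(A_1^{i+1},\ldots,A_{n_{i+1}}^{i+1}) \to P_{i+1}(V_{i+1});
\end{align*}
(c) the output rule $P_k(V_k) \to S(B_1,\ldots,B_l)$; and (d) as many auxiliary single-body-atom rules as needed, each merely permuting the arguments of a fresh copy of a predicate, so that every rule of $\Pi$ literally matches an argument pattern of Lemma~\ref{lemmaOneAtomBody} or Lemma~\ref{lemmaRulesTwoBodyPredicate}. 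For the links: when $Z_{i+1}$ exists we arrange it to be the last argument of $P_i$ and the first of (the copy of) $R_{i+1}$, so the link matches the form $R_1(A_1,\ldots,A_k)\wedge R_2(A_k,\ldots)\to\ldots$ of Lemma~\ref{lemmaRulesTwoBodyPredicate}; when $\mathsf{vars}(B_{i+1})\cap V_i=\emptyset$ the link is a two-body-atom rule with disjoint variables, a degenerate case treated below.

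Next I would lift $\mathcal M$ to a model $\widehat{\mathcal M}$ of $\Pi$ over the enlarged signature, interpreting each $P_i$ as exactly the projection onto $V_i$ of the natural join of $R_1^{\mathcal M},\ldots,R_i^{\mathcal M}$ (and the auxiliary symbols correspondingly). Then $\widehat{\mathcal M}$ agrees with $\mathcal M$ on the original symbols, satisfies all rules of types (a), (b), (d) by construction, and satisfies (c) precisely because $\mathcal M\models\sigma$. Building a convex-hull interpretation $\widehat{\eta^*}$ from $\widehat{\mathcal M}$ by the same recipe used in the main construction --- basis vectors for the objects of $\mathfrak O(\mathcal M)$, the prescribed points $\mathbf{v}_1,\ldots,\mathbf{v}_n$ for the fresh constants, and $\widehat{\eta^*}(P) = \text{CH}\{\,\eta(\mathbf{y}) \,|\, P(\mathbf{y})\in\widehat{\mathcal M}\,\}$ for the fresh predicates --- one checks that $\widehat{\eta^*}$ restricts to $\eta^*$ on the original symbols, hence that $\phi(\widehat{\eta^*})$ restricted to the original symbols equals $\phi(\eta^*)=\mathcal M^*$. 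Lemmas~\ref{lemmaOneAtomBody} and~\ref{lemmaRulesTwoBodyPredicate} now apply verbatim to each rule of $\Pi$ --- their proofs use only that the underlying classical model satisfies the rule and that all regions are convex hulls of instance vectors --- and a disjoint-variable link rule is dispatched in one line: if $\mathbf{x}\in\widehat{\eta^*}(P_i)$ and $\mathbf{y}\in\widehat{\eta^*}(R_{i+1})$ are convex combinations $\sum_s\lambda_s\eta(\mathbf{a}^s)$ and $\sum_t\mu_t\eta(\mathbf{b}^t)$ of instance vectors, then $\mathbf{x}\oplus\mathbf{y} = \sum_{s,t}\lambda_s\mu_t\,(\eta(\mathbf{a}^s)\oplus\eta(\mathbf{b}^t))$ is a convex combination of instance vectors of $P_{i+1}$. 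Hence $\phi(\widehat{\eta^*})$ is a model of every rule of $\Pi$; chaining the link rules and then applying (c) shows that $\phi(\widehat{\eta^*})$, and therefore $\mathcal M^*$, is a model of $\sigma$.

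The step I expect to be the genuine obstacle is the decomposition itself: one must be certain that each partial-join predicate $P_{i+1}$ is reachable from $P_i$ by a rule sharing at most one variable, which is exactly the quasi-chained hypothesis, and this is precisely where convexity stays compatible with the construction. Indeed Lemma~\ref{lemmaRelationComposition} shows that when two body atoms share a single variable the set of body-satisfying tuples is still the convex region exhibited there, so pushing it through a convex head region is safe; by contrast Example~\ref{counterExampleNonChainRules} shows that sharing two variables makes the body region strictly larger than the convex hull of the joint instances (midpoints of instances of different atoms may collapse onto a common diagonal tuple), which is exactly why the argument does not extend beyond quasi-chained rules. Everything else is bookkeeping: routing the argument permutations, and the occasional degenerate disjoint-variable link, through auxiliary single-body-atom rules so that Lemmas~\ref{lemmaOneAtomBody} and~\ref{lemmaRulesTwoBodyPredicate} can be quoted unchanged.
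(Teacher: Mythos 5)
Your proposal is correct and follows essentially the same route as the paper: the paper likewise rewrites $\sigma$ as a cascade of two-body-atom rules through fresh intermediate predicates $T_2,\ldots,T_{k-1}$ (interpreting them via the partial joins, realized there as intersections of cylinders over $\eta(R_1),\ldots,\eta(R_{k-1})$ rather than as convex hulls of join instances) and then invokes Lemma~\ref{lemmaRulesTwoBodyPredicate} on each link. Your explicit product-of-convex-combinations treatment of the disjoint-variable link is a detail the paper only waves at ("the same argument can be applied to other types of quasi-chained rules"), and it is handled correctly.
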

\begin{proof}
Let us consider fresh relation names $T_{2},\allowbreak T_{3},\allowbreak...,\allowbreak T_{k-1}$ and the following set of rules:
\begin{align}
    &R_1(A_1^1,...,A^1_{n_1}) \wedge R_2(A_1^2,...,A^2_{n_2}) \label{eqLemmaNpredicatesRuleSet1}\\
    &\quad\quad\quad \rightarrow T_2(A_1^1,...,A^1_{n_1},A_1^2,...,A^2_{n_2}) \notag\\
    &T_2(A_1^1,...,A^2_{n_1}) \wedge R_3(A_1^3,...,A^3_{n_3}) \\
    &\quad\quad\quad \rightarrow T_3(A_1^1,...,A^1_{n_1},A_1^2,...,A^2_{n_2},A_1^3,...,A^3_{n_3})\notag\\
    &...\notag\\
    &T_{k-1}(A_1^1,...,A^{k-1}_{n_{k-1}}) \wedge R_k(A_1^k,...,A^k_{n_k})\label{eqLemmaNpredicatesRuleSet2}\\
    &\quad\quad\quad \rightarrow S(B_1,...,B_l) \notag
\end{align}
For the ease of presentation we will assume that $A^1_{n_1} = A_1^2$, $A^2_{n_2}= A_1^3, ..., A^{k-1}_{n_{k-1}}=A_1^k$.  However, it is easy to see that the same argument can be applied to other types of quasi-chained rules.

Let $\eta^{+}$ be the extension of $\eta$ to the relation symbols $T_2,...T_{k-1}$, defined as follows. We consider the following rules:
\begin{align*}
    \eta^+(T_2) &= (\eta(R_1) \oplus \mathbb{R}^{m(n_1-1)}) \cap (\mathbb{R}^{m(n_2-1)} \oplus \eta^*(R_2)) \\ 
    \eta^+(T_3) &= (\eta(T_2) \oplus \mathbb{R}^{m(n_1+n_2-1)})\\
    &\quad\quad\quad \cap (\mathbb{R}^{m(n_3-1)} \oplus \eta^*(R_3))\\ 
    &...\\
    \eta^+(T_{k-1}) &= (\eta(T_{k-2}) \oplus \mathbb{R}^{m(\sum_{i=1}^{k-2}n_i -1)})\\
    &\quad\quad\quad \cap (\mathbb{R}^{m(n_{k-1}-1)} \oplus \eta^*(R_{k-1})) 
\end{align*}
Let $\mathcal{M}^+ = \phi(\eta^+)$. Note that $\mathcal{M}$ and $\mathcal{M}^+$ only differ in the fact that instances of the relation names $T_2,...T_{k-1}$ have been added. Let $\eta^{+*}$ be the extension of $\eta^+$ to $\mathfrak{O}(\mathcal{M}) \cup \{a_1,...,a_n\}$ defined by $\eta^{+*}(a_i)=\mathbf{v_i}$. By Lemma \ref{lemmaRulesTwoBodyPredicate} it follows that $\mathcal{M}^{+*} = \phi(\eta^{+*})$ is a model of \eqref{eqLemmaNpredicatesRuleSet1}--\eqref{eqLemmaNpredicatesRuleSet2}. This means that $\mathcal{M}^{+*}$ is also a model of $\sigma$ since the latter rule can be deduced from  \eqref{eqLemmaNpredicatesRuleSet1}--\eqref{eqLemmaNpredicatesRuleSet2}. Since $\mathcal{M}^{+*}$ and $\mathcal{M}{^*}$ only differ in the fact that the former contains instances of $T_2,...,T_{k-1}$, it follows that $\mathcal{M}^*$ is a model of $\sigma$.
\end{proof}

In the previous lemmas, we have only considered variables as terms. However, it is easy to verify that the proofs remain valid if some (or all) of the terms are constants. Similarly, note that the definition of quasi-chainedness allows for rules where a shared variable is repeated, such as:
$$
R_1(X,Y,Y) \wedge R_2(Y,Y,Y,Z) \rightarrow S(X,Y)
$$
It is easy to verify that our results still hold for such rules. Finally, it is also straightforward to verify that the result is valid for constraints.

If $\Sigma$ contains non-datalog rules, we need to prove that it is always possible to extend $\eta^*$ to any nulls that are required to define a model of $(\Sigma, D \cup \phi(\eta^*))$. Note the recursive nature of this condition, which is intuitively due to the fact that adding nulls may require us to introduce additional nulls. 

\begin{lemma}\label{lemmaExistentialRule}
Let $\eta^*$ be defined as before. There is a (potentially infinite) set of points ${\{\mathbf{w_1},\mathbf{w_2},...\}} \subseteq \mathbb{R}^m$ and fresh nulls $x_1,x_2,...$ such that the extension $\eta^{**}$ of $\eta^*$ to $x_1,x_2,...$ defined by $\eta^{**}(x_i) = {\bf w_i}$, is such that $\mathcal M^{**} = \phi(\eta{^{**}})$ is a model of $(\Sigma, D \cup \phi(\eta^{**}))$.
\end{lemma}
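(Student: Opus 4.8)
The plan is to construct $\eta^{**}$ by a \emph{chase}-style procedure. Starting from $\eta_0=\eta^{*}$, I would repeatedly pick an existential rule $\sigma\in\Sigma$ together with a homomorphism $h$ from $\mathsf{body}(\sigma)$ into the current structure $\mathcal{M}_i=\phi(\eta_i)$ whose head is not yet satisfied, and extend $\eta_i$ to $\eta_{i+1}$ by adding one fresh null per existential variable of $\sigma$, each placed at a carefully chosen point of $\mathbb{R}^m$. Crucially, the regions $\eta(R)$ are never modified, so at every stage $\mathcal{M}_i\models R(z_1,\ldots,z_k)$ iff $\eta_i(z_1)\oplus\cdots\oplus\eta_i(z_k)\in\eta(R)$. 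Since the proofs of Lemmas \ref{lemmaOneAtomBody}--\ref{lemmaRulesManyBodyPredicate} only use that $\eta$ sends the elements of $\mathfrak{O}(\mathcal{M})$ to distinct standard basis vectors and the remaining objects to arbitrary points, they apply verbatim with $\eta^{*}$ replaced by any such $\eta_i$; hence each $\mathcal{M}_i$ automatically satisfies all quasi-chained datalog rules and all quasi-chained constraints of $\Sigma$, and only the existential rules need attention.

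The technical heart is a strengthening of Lemma \ref{lemmaRelationComposition}, which I would obtain by the same ``rewriting into binary-body rules with fresh relation symbols'' technique used for Lemma \ref{lemmaRulesManyBodyPredicate}, iterating the composition identity of Lemma \ref{lemmaRelationComposition}: if $\sigma$ has body $B_1\wedge\ldots\wedge B_n$ with variables $v_1,\ldots,v_d$ and $h$ is a homomorphism from $\mathsf{body}(\sigma)$ into $\mathcal{M}_i$, then there exist homomorphisms $g_1,\ldots,g_q$ from $\mathsf{body}(\sigma)$ into the \emph{original} finite model $\mathcal{M}$ and weights $\lambda_1,\ldots,\lambda_q>0$ with $\sum_\ell\lambda_\ell=1$ such that $\eta_i(h(v_t))=\sum_{\ell=1}^q\lambda_\ell\,\eta(g_\ell(v_t))$ for every $t$. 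This is where quasi-chainedness is indispensable: since the $\eta(x)$ are distinct standard basis vectors, two convex combinations that agree on a \emph{single} shared variable must use the same underlying object distribution for that variable, which is exactly what lets one glue the per-atom convex combinations into a single joint combination; with two shared variables only the marginals agree and the gluing breaks down, precisely as the counterexample of Example \ref{counterExampleNonChainRules} illustrates.

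Given such $h$, $g_1,\ldots,g_q$ and $\lambda_1,\ldots,\lambda_q$ for a rule $\sigma\colon B_1\wedge\ldots\wedge B_n\rightarrow\exists Z_1,\ldots,Z_j.\,S(u_1,\ldots,u_p)$, I would exploit that $\mathcal{M}\models\sigma$: each $g_\ell$ extends to a homomorphism $g_\ell'$ of the whole rule, sending $Z_s$ to some $o_\ell^s\in\mathfrak{O}(\mathcal{M})$ with $S(g_\ell'(u_1),\ldots,g_\ell'(u_p))\in\mathcal{M}$. Then I introduce fresh nulls $x_1,\ldots,x_j$, set $\eta_{i+1}(x_s)=\sum_\ell\lambda_\ell\,\eta(o_\ell^s)$, leave $\eta_{i+1}$ unchanged elsewhere, and put $h'=h\cup\{Z_s\mapsto x_s\}$. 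For every term $u$ of the head we then have $\eta_{i+1}(h'(u))=\sum_\ell\lambda_\ell\,\eta(g_\ell'(u))$ --- for frontier terms by the strengthened lemma, for existential terms by construction --- so
\begin{align*}
&\eta_{i+1}(h'(u_1))\oplus\cdots\oplus\eta_{i+1}(h'(u_p))\\
&\qquad=\sum_{\ell=1}^q\lambda_\ell\bigl(\eta(g_\ell'(u_1))\oplus\cdots\oplus\eta(g_\ell'(u_p))\bigr)\in\eta(S)
\end{align*}
by convexity of $\eta(S)$, since each summand lies in $\eta(S)$. Thus the trigger is satisfied in $\mathcal{M}_{i+1}$, and the invariant of the first paragraph persists, since it holds for every extension of $\eta$ by fresh objects.

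Finally I would iterate this in a fair manner (the usual fair-chase bookkeeping, so that an unsatisfied trigger is eventually processed), producing a chain $\eta_0\subseteq\eta_1\subseteq\cdots$; put $\eta^{**}=\bigcup_i\eta_i$ and $\mathcal{M}^{**}=\phi(\eta^{**})$. Since every ground atom mentions only finitely many objects, $\mathcal{M}^{**}=\bigcup_i\mathcal{M}_i$, so $\mathcal{M}^{**}$ satisfies every existential rule (by fairness), every quasi-chained datalog rule and quasi-chained constraint (by the first paragraph applied to $\eta^{**}$), and contains $D\subseteq\mathcal{M}\subseteq\mathcal{M}^{**}$; hence $\mathcal{M}^{**}=\phi(\eta^{**})$ is a model of $(\Sigma,D\cup\phi(\eta^{**}))$, as required. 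I expect the main obstacle to be the strengthened composition lemma of the second paragraph --- lifting the convex-combination characterization of Lemma \ref{lemmaRelationComposition} from ground composite atoms to arbitrary quasi-chained rule bodies and to arbitrary extensions $\eta_i$ of $\eta$ --- together with verifying that the newly inserted nulls never force a spurious atom violating a constraint, which works only because the regions $\eta(R)$ are never changed.
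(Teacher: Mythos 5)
Your proposal is correct and follows essentially the same route as the paper's proof: an incremental (chase-style) construction that places each fresh null at the convex combination $\sum_\ell \lambda_\ell\,\eta(o_\ell^s)$ of the original model's witnesses, where the weights come from decomposing the body match as a convex combination of instances from $\mathcal{M}$, and then concludes by convexity of $\eta(S)$. You are in fact somewhat more explicit than the paper about the two points it hand-waves --- the joint convex decomposition of a multi-atom quasi-chained body (which the paper defers to the argument of its Lemmas on two-atom and many-atom bodies) and the fairness of the possibly non-terminating iteration --- but the underlying argument is the same.
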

\begin{proof}
We construct $\eta^{**}$, and the associated model $\mathcal{M}^{**}=\phi(\eta^{**})$ in an incremental fashion, where initially we set $\eta^{**}=\eta^*$. Note that we already have that $\mathcal{M}^{**} \models D$, since $\eta^{**}$ is an extension of $\eta$, and that we trivially have $\mathcal{M}^{**}\models \phi(\eta^{**})$. What remains is to extend $\eta^{**}$ such that $\mathcal{M}^{**} \models \Sigma$.
Suppose that a rule $\sigma$ of the following form were not satisfied by $\mathcal{M}^{**}$:
\begin{align}\label{eqExistentialRuleOneBodyPredicate}
R(A_1,...,A_k) \rightarrow \exists B_1,...,B_l. S(A_{s_1},...,A_{s_t},B_1,...,B_l)
\end{align}
where $s_1,...,s_t$ are (not necessarily distinct) indices from $\{1,...,k\}$. Note that for the ease of presentation, we consider a rule with a single atom in the body. However, by using an entirely analogous argument as in Lemmas \ref{lemmaRulesTwoBodyPredicate} and \ref{lemmaRulesManyBodyPredicate} we can extend this result for rules with more than one atom in the body.

The fact that $\sigma$ is not satisfied by $\mathcal{M}^{**}$ means that there are $z_1,...,z_k \in \mathfrak{O}(\mathcal{M}^{**})$ such that $R(z_1,...,z_k) \in {\mathcal{M}}^{**}$ and
\begin{align*}
\mathcal{M}^{**}\not \models \exists B_1,...,B_l. S(z_{s_1},...,z_{s_t},B_1,...,B_l)
\end{align*}
From $R(z_1,...,z_k) \in {\mathcal{M}}^{**}$ we have that:
$$
\eta^{**}(z_1)\oplus ... \oplus \eta^{**}(z_k) \in \eta^{**}(R)
$$
Since $\eta^{**}(R)=\eta(R)$ this means that there exist instances $(a_1^1,...,a_k^1),\allowbreak...,\allowbreak(a_1^q,...,a_k^q)$ of $R$, with each $a_i^j$ from $\mathfrak{O}(\mathcal{M})$, such that for $j\in \{1,...,k\}$ it holds that
\begin{align*}
\eta^*(z_j) = \lambda_1 \eta^*(a_j^1) + ... +  \lambda_q \eta^*(a_j^q)
\end{align*}
where $\lambda_1,...,\lambda_q > 0$ and $\sum_i \lambda_i=1$. Since $\mathcal{M}$ is a model of $\sigma$ it must hold that there are objects $u_1^1,\allowbreak ...,\allowbreak u_l^1,\allowbreak ...,\allowbreak u_1^q,\allowbreak ...,\allowbreak u_l^q$ in $\mathfrak{O}(\mathcal{M})$ such that, for each $i\in \{1,...,q\}$, $(a_{s_1}^i,...,a_{s_t}^i,u_1^i,...,u_l^i)$ is an instance of $S$. We then have that $\eta^{**}(S)$ contains the following point:
$$
\sum_i \lambda_i \left(\eta^{**}(a_{s_1}^i){\oplus} ... {\oplus}\, \eta^{**}(a_{s_t}^i) {\oplus}\,\eta^{**}(u_1^i){\oplus} ... {\oplus}\, \eta^{**}(u_l^i)\right)
$$
Now we introduce fresh nulls $c_1,...,c_l$ and define $\eta^{**}(c_j) = \sum_i \lambda_i \eta^{**}(u_j^i)$ (unless $q=1$ in which case we can replace $c_j$ by $u_j^1$). Then we have that $\eta^{**}(S)$ contains:
$$
\eta^{**}(z_{s_1})\oplus ... \oplus \eta^{**}(z_{s_t}) \oplus \eta^{**}(c_1) \oplus ... \oplus \eta^{**}(c_l)
$$
which means that $M^{**}$ contains $S(z_1,...,z_k,c_1,...,c_l)$ and thus satisfies the head of the rule \eqref{eqExistentialRuleOneBodyPredicate}. We repeat this process for each instance $(z_1,...,z_k)$ of $R$, and for each rule $\sigma$ in $\Sigma$. In general, this process may not terminate, although the set of added nulls is clearly countable. It is furthermore clear that the infinite interpretation $\mathcal{M}^{**}$ which is constructed in this way is indeed a model of $\sigma$.
\end{proof}

\fi

\end{document}